\title{Dynamical systems' based neural networks}
\author{Elena Celledoni\thanks{Department of Mathematical Sciences, NTNU, N-7491 Trondheim, Norway (\email{elena.celledoni@ntnu.no}, \email{davide.murari@ntnu.no}, \email{brynjulf.owren@ntnu.no})}
\and Davide Murari \footnotemark[1]
\and Brynjulf Owren\footnotemark[1]
\and Carola-Bibiane Schönlieb\thanks{Department of Applied Mathematics and Theoretical Physics, University of Cambridge, Wilberforce Road, Cambridge CB3 0WA, UK. (\email{cbs31@cam.ac.uk}, \email{fs436@cam.ac.uk})}
\and Ferdia Sherry\footnotemark[2]}
\newcommand{\dint}{\, \mathrm d}
\newcommand{\bl}[1]{\textcolor{black}{#1}}
\newcommand{\red}[1]{{#1}}
\newcommand{\ddiff}{\mathrm d}
\DeclareMathOperator{\Lip}{Lip}
\DeclareMathOperator{\diag}{diag}
\newcommand*{\addFileDependency}[1]{% argument=file name and extension
  \typeout{(#1)}% latexmk will find this if $recorder=0 (however, in that case, it will ignore #1 if it is a .aux or .pdf file etc and it exists! if it doesn't exist, it will appear in the list of dependents regardless)
  \@addtofilelist{#1}% if you want it to appear in \listfiles, not really necessary and latexmk doesn't use this
  \IfFileExists{#1}{}{\typeout{No file #1.}}% latexmk will find this message if #1 doesn't exist (yet)
}
\date{}
\begin{document}
\maketitle
\begin{abstract}
Neural networks have gained much interest because of their effectiveness in many applications. However,
their mathematical properties are generally not well understood. If there is some underlying geometric structure inherent to the data or to the function to approximate, it is often desirable to take this into account in the design of the neural network. In this work, we start with a non-autonomous ODE and build neural networks using a suitable, structure-preserving, numerical time-discretisation. The structure of the neural network is then inferred from the properties of the ODE vector field. Besides injecting more structure into the network architectures, this modelling procedure allows a better theoretical understanding of their behaviour. We present two universal approximation results and demonstrate how to impose some particular properties on the neural networks. A particular focus is on 1-Lipschitz architectures including layers that are not 1-Lipschitz. These networks are expressive and robust against adversarial attacks, as shown for the CIFAR-10 and CIFAR-100 datasets.
\end{abstract}

\begin{keywords}
Neural networks, dynamical systems, Lipschitz networks, Structure-preserving deep learning, Universal approximation theorem.
\end{keywords}

% REQUIRED
\begin{MSCcodes}
65L05, 65L06, 37M15
\end{MSCcodes}

\section{Introduction}
Neural networks have been employed to accurately solve many different tasks (see, e.g.,~\cite{arridge2019solving,bronstein2021geometric,raissi2019physics,jumper2021highly}). Indeed, because of their excellent approximation properties, ability to generalise to unseen data, and efficiency, neural networks are one of the preferred techniques for the approximation of functions in high-dimensional spaces.

In spite of this popularity, a substantial number of results and success stories in deep learning still rely on empirical evidence and more theoretical insight is needed. Recently, a number of scientific papers on the mathematical foundations of neural networks have appeared in the literature,~\cite{berner2021modern,weinan2017proposal,Saxe2014ExactST,ShwartzZiv2017OpeningTB,thorpe2018deep,huang2020deep}. In a similar spirit, many authors consider the design of deep learning architectures taking into account specific mathematical properties such as stability, symmetries, or constraints on the Lipschitz constant~\cite{lecun1989backpropagation,hertrich2021convolutional,galimberti2021hamiltonian,smets2022pde,chen2019symplectic,gomez2017reversible,trockman2021orthogonalizing,jin2020sympnets,wang2020orthogonal,zakwan2022robust}. Even so, the imposition of structure on neural networks is often done in an ad hoc manner, making %the analysis of 
the resulting input to output mapping $F:\mathcal{X}\rightarrow \mathcal{Y}$ %a hard task. 
hard to analyse. In this paper, we describe a general and systematic way to impose desired mathematical structure on neural networks leading to an easier approach to their analysis.\newline\newline 
There have been multiple attempts to formulate unifying principles for the design of neural networks. We hereby mention Geometric Deep Learning (see e.g.~\cite{bronstein2017geometric,bronstein2021geometric}), Neural ODEs (see e.g.~\cite{chen2018neural,massaroli2020dissecting,ruiz2021neural,xiao2018dynamical}), the continuous-in-time interpretation of Recurrent Neural Networks (see e.g.~\cite{rusch2021unicornn,chang2019antisymmetricrnn}) and of Residual Neural Networks (see e.g.~\cite{weinan2017proposal,lu2018beyond,celledoni2021structure,ruthotto2020deep,agrachev2021control}). In this work, we focus on Residual Neural Networks (ResNets) and build upon their continuous interpretation.\newline\newline 
Neural networks are compositions of parametric maps, i.e.\ we can characterise a neural network as a map $\mathcal{N} = f_{\theta_k}\circ \ldots \circ f_{\theta_1}:\mathbb{R}^n\rightarrow\mathbb{R}^m$, with $f_{\theta_i}:\mathbb{R}^{n_i}\rightarrow\mathbb{R}^{n_{i+1}}$ being the network layers. For ResNets the most important parametric maps are of the form 
\begin{equation}
x\mapsto f_{\theta_i}(x)= x + h\Lambda(\theta_i,x).
\label{eq:expEulerStep}
\end{equation}
The continuous-in-time interpretation of ResNets arises from the observation that if $n_i=n_{i+1}$, $f_{\theta_i}$ coincides with one $h-$step of the explicit Euler method applied to the non-autonomous ODE $\dot{x}(t) =\Lambda(\theta(t),x(t))$. %A typical example is $\Lambda(\theta(t),x(t))=\Sigma(W(t)x(t)+b(t))$, with $t\mapsto W(t)\in\mathbb{R}^{n\times n}$, $t\mapsto b(t)\in\mathbb{R}^n$ and $\Sigma(z)=[\sigma(z_1),\ldots,\sigma(z_n)]$ for a scalar non-linearity $\sigma$.
In this work, we consider piecewise-autonomous systems, i.e.\ we focus on time-switching systems of the form
\begin{equation}\label{eq:switch}
\dot{x}(t) = f_{s(t)}(x(t)),\,\,s:[0,T]\rightarrow \{1,\ldots,N\},\,f_i\in\mathcal{F},
\end{equation}
with $s$ being piecewise constant (see e.g.~\cite{ruiz2021neural,liberzon2003switching}), and $\mathcal{F}$ a family of parametric vector functions. This simplification is not restrictive and can help analyse and design neural networks, as we will clarify throughout the paper.

This interpretation of ResNets gives the skeleton of our reasoning. %We then adapt it to impose a desired structure on the architecture. 
Indeed, we replace the explicit Euler method in~\cref{eq:expEulerStep} with suitable (structure-preserving) numerical flows of appropriate vector fields. We call the groups of layers obtained with these numerical flows ``dynamical blocks". The choice of the vector field is closely related to the structure to impose. For example, to derive symplectic neural networks we would apply symplectic time integrators to Hamiltonian vector fields. %The main advantage of our approach is that it is generally easier to impose a property on a vector field than on its flow map. Indeed, one has access to the explicit expression of the vector field. 
This approach enables us to derive new structured networks systematically and collocate other existing architectures into a more general setting, making their analysis easier. For instance, \cref{se:approx} presents a strategy to study the approximation capabilities of some structured networks. Finally, we highlight the flexibility and the benefits of this framework in~\cref{se:lipschitz}, where we show that to obtain expressive and robust neural networks, one can also include layers that are not 1-Lipschitz. \newline\newline 
There are multiple situations where one could be interested in networks with some prescribed property. We report three of them here, where we refer to $F$ as the function to approximate:
\begin{enumerate}
    \item When $F$ has some known characterising property, e.g.~$F$ is known to be symplectic; see~\cref{se:structure}.
    \item When the data we process has a particular structure, e.g.~vectors whose entries sum to one, as we present in~\cref{se:structure}.
    \item When we can approximate $F$ to sufficiently high accuracy with functions in $\mathcal{G}$, a space that is well suited to model the layers of a network. An example is using the space $\mathcal{G}$ of 1-Lipschitz functions to define a classifier robust to adversarial attacks; see~\cref{se:lipschitz}. 
 \end{enumerate}
Thus, there are various applications where having neural networks structured in a particular way is desirable. We will delve deeper into some of them in the following sections. To be precise, we remark that all the properties we focus on are preserved under composition, such as being 1-Lipschitz or symplectic.  \newline\newline 
The paper is structured in five sections. First, in~\cref{se:approx} we investigate the universal approximation capabilities of some neural networks, thanks to vector field decompositions, splitting methods and an embedding of the dynamics into larger dimensional spaces. We then move, in~\cref{se:lipschitz}, to a neural network that has the property of being 1-Lipschitz. After the mathematical derivation of the architecture, we present some numerical experiments on adversarial robustness for the CIFAR-10 and CIFAR-100 image classification problems. \bl{We devote a significant part of the experimental side of this paper to examples in the well-established field of adversarial robustness, but we furthermore provide examples of other desirable structural properties that can be imposed on neural networks using connections to dynamical systems.} In~\cref{se:structure}, we introduce such neural networks with specific designs. This last section aims to present in a systematic way how one can impose certain properties on the architecture. We finally conclude the paper in~\cref{se:conclusion}, mentioning some promising directions for further work. \newline\newline 
Before moving on, we now report a numerical experiment that motivates our investigation of structured neural networks. The results highlight how imposing a structure does not have to degrade the approximation's quality considerably. Furthermore, this experiment suggests that not all the constraining strategies perform similarly\bl{, as we also highlight in \cref{se:lipschitz}}. Thus, a systematic process to impose structure is essential since it allows changing the architecture in a guided manner while preserving the property of interest.
\subsection{Classification of points in the plane}
We present a numerical experiment for the classification problem of the dataset in~\cref{fig:points}. We consider neural networks that are 1-Lipschitz, as in~\cref{se:lipschitz}. We define the network layers alternatingly as contractive flow maps, whose vector fields belong to $\mathcal{F}_c = \{ -A^T\Sigma(Ax+b):\,A^TA=I\}$, and as flows  of other Lipschitz continuous vector fields in $\mathcal{F}=\{\Sigma(Ax+b):\,A^TA=I\}$, with $\Sigma(z) = [\sigma(z_1),\ldots,\sigma(z_n)]$ and $\sigma(s)=\max\{s,s/2\}$\footnote{To impose the weight orthogonality, we set $A=\mathrm{expm}(W-W^T)$ with $\mathrm{expm}$ being the matrix exponential and $W$ a trainable matrix.}. \bl{In \cref{se:lipschitz} we expand on the choice of this activation function $\sigma$, which is called LeakyReLU and was introduced in \cite{maas2013rectifier}.} \bl{The time steps for each vector field are network parameters, together with the matrices $A$ and vectors $b$.} We constrain the time steps to get a 1-Lipschitz network, see~\cref{se:lipschitz}. We report the results in~\cref{fig:alternation} and~\cref{ta:averages}.

The average classification test accuracy and final integration time, in combination, get better by combining $\mathcal{F}_c$ with $\mathcal{F}$ instead of considering $\mathcal{F}_c$ alone. In particular, we see that the final integration time $T$ with $\mathcal{F}_c \cup \mathcal{F}$ is the smallest without significantly compromising the accuracy. \bl{The parameter $T$ quantifies how much the network layers transform the points. The larger the timestep, the further a layer is from the identity map; hence we can get a more natural and efficient solution by alternating the vector fields.} In~\cref{se:approx} we reinforce this empirical result, proving results about theoretical approximation guarantees. This renders the possibility of obtaining neural networks with prescribed properties without compromising their approximation capabilities.
\begin{table}[ht!]
\centering
\begin{tabular}{|c|c|c|}
\hline
Adopted family of vector fields      & Median accuracy & Median of $T$ \\ \hline
$\mathcal{F}\cup \mathcal{F}_c$ & 98.0\%       & 1.84        \\ \hline
$\mathcal{F}$   & 99.0\%       & 7.53        \\ \hline
$\mathcal{F}_c$ & 97.3\%       & 19.82       \\ \hline
\end{tabular}
\caption{We perform 100 experiments alternating vector fields in $\mathcal{F}_c$ with those in $\mathcal{F}$, 100 using just vector fields in $\mathcal{F}_c$, and 100 with only those in $\mathcal{F}$. We work with networks with ten residual layers throughout the experiments. In the table we report the median final time $T$ and test accuracy for the three set of experiments analysed}
\label{ta:averages}
\end{table}
\begin{figure}[ht!]
\centering
\begin{subfigure}{.5\textwidth}
    \centering
    \includegraphics[width=.9\textwidth]{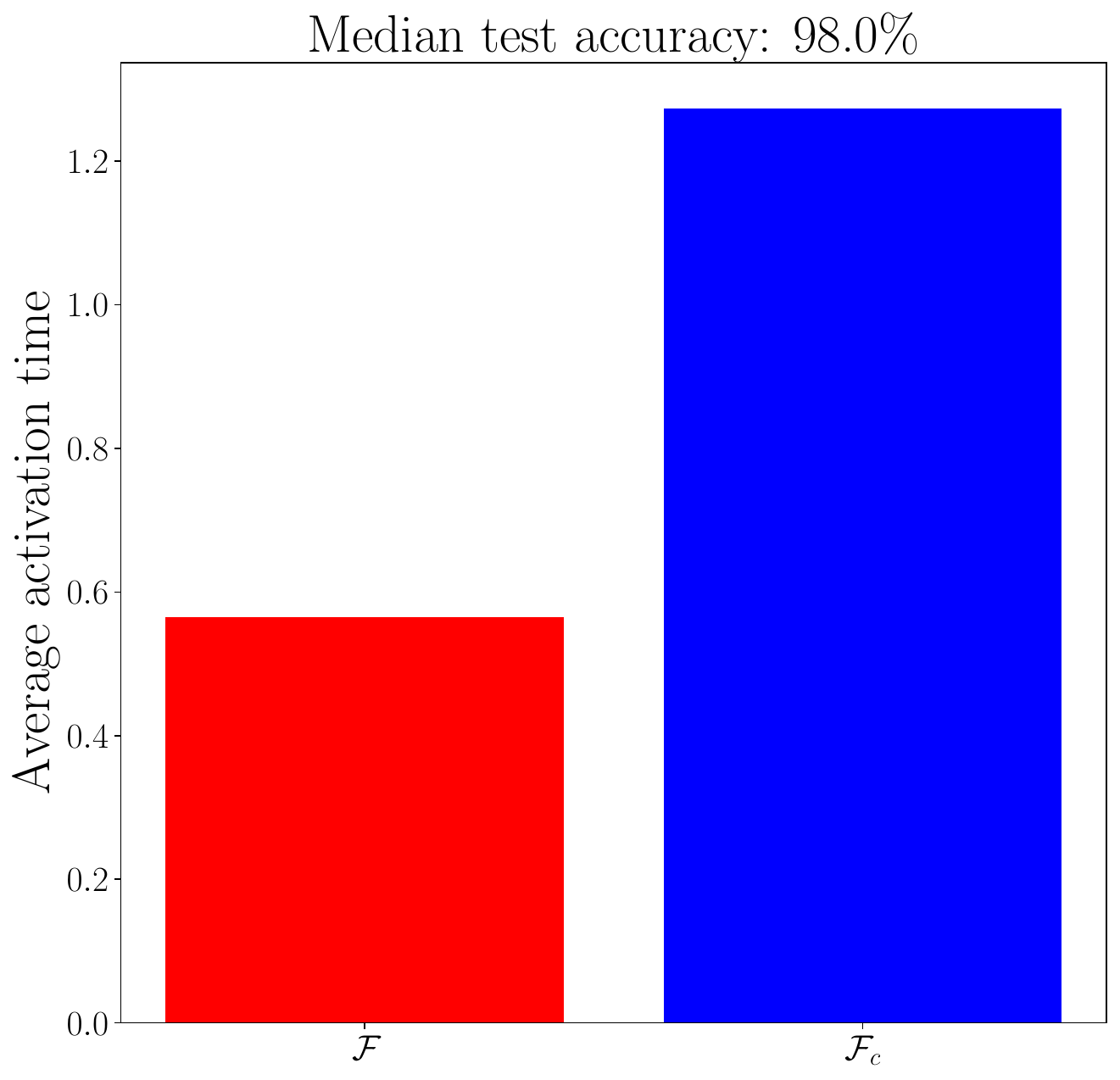}
    \caption{Mean length of the time intervals along which the two dynamical regimes are active.}
    \label{fig:alternation}
\end{subfigure}
\hfill
\begin{subfigure}{.45\textwidth}
    \centering 
    \includegraphics[width=.9\textwidth]{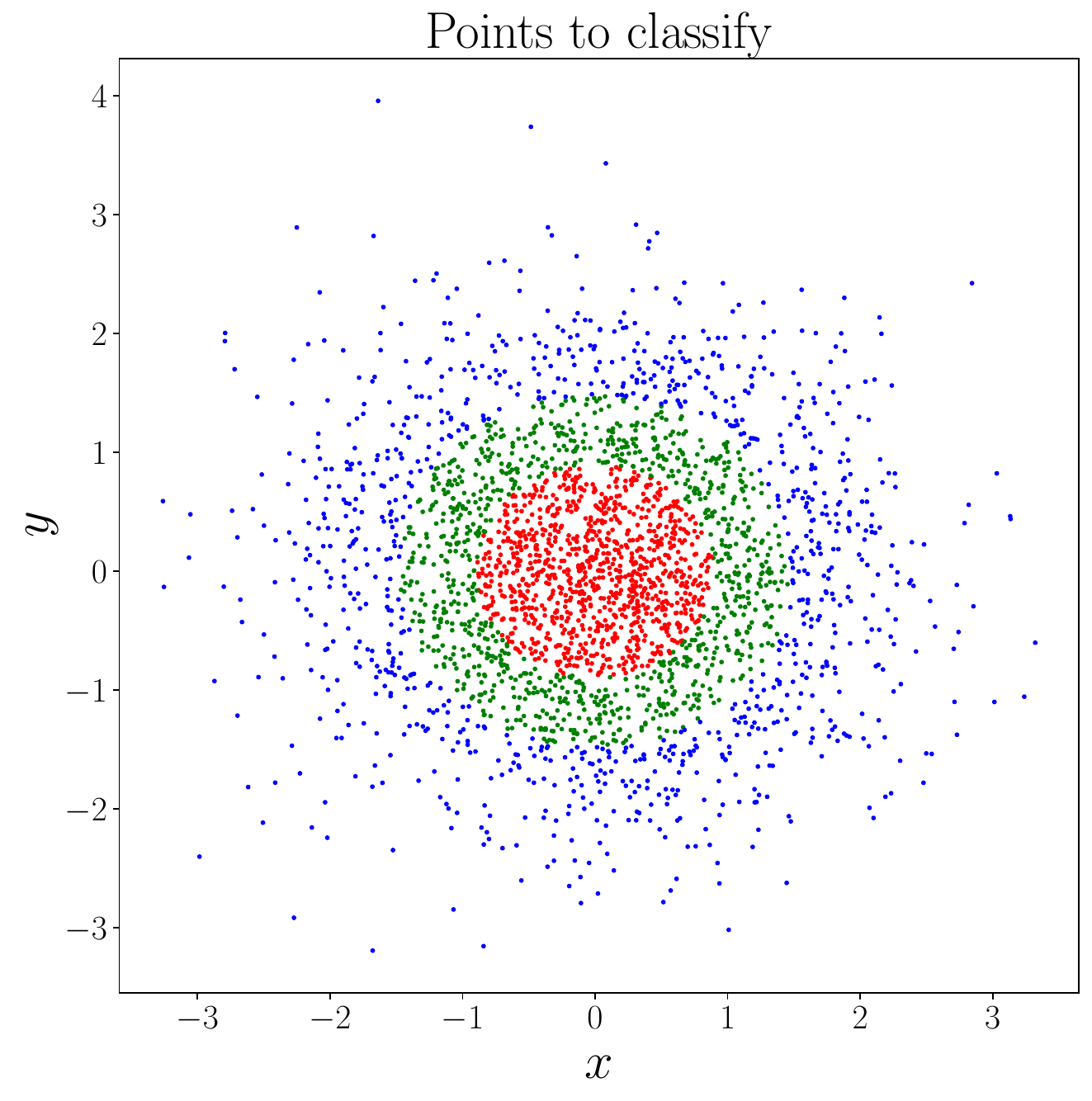}
    \caption{Dataset studied for the classification problem}
    \label{fig:points}
\end{subfigure}
\caption{Results from the experiments alternating the vector fields of $\mathcal{F}$ and those of $\mathcal{F}_c$, together with the dataset of interest}
\end{figure}
\section{Universal approximation properties}\label{se:approx}
As introduced before, neural network layers can be modelled by discretising ordinary differential equations. In particular, this ODE-based approach can also be beneficial for imposing some structure on neural networks and providing a better theoretical understanding of their properties. In this section, we follow this principle and prove the universal approximation capabilities of two neural network architectures. Starting with the continuous-in-time interpretation of neural networks, many approaches are possible to prove such approximation properties, often based on merging the broad range of results from dynamical systems theory and numerical analysis. One can proceed, for example, in a constructive way as done in~\cite{ruiz2021neural}, where the authors investigate the dynamics of some neural networks and explicitly construct solutions to the problem of approximating a target function. Another possibility is to study the approximation capabilities of compositions of flow maps, as done in~\cite{li2022deep}. In this section, we focus on two solutions that, to the best of our knowledge, are new. The first result is based on a vector field decomposition, while the second is based on embedding vector fields into larger dimensional spaces.\newline\newline
The approximation results that we cover rely on approximating vector fields arbitrarily well. Consequently, this allows us to approximate their flow maps accurately. This is based on the fact that for a sufficiently regular vector field $X\in\Lip(\mathbb{R}^n,\mathbb{R}^n)$, if $\tilde{X}:\mathbb{R}^n\rightarrow\mathbb{R}^n$ is such that for every $x\in\mathbb{R}^n$
\begin{equation}\label{eq:supapprox}
\|X(x)-\tilde{X}(x)\|<\varepsilon,
\end{equation}
then also their flow maps are close to one another for finite time intervals. We formalise this reasoning in~\cref{pr:prop}. In the proposition and throughout the paper, we denote with $\Phi^t_X(z)$ the time-$t$ flow of the vector field $X$, applied to $z$.
\begin{proposition}\label{pr:prop}
Let $X\in\Lip(\mathbb{R}^n,\mathbb{R}^n)$ and $\tilde{X}\in\Lip(\mathbb{R}^n,\mathbb{R}^n)$ be as in~\cref{eq:supapprox}. Then $\|\Phi_X^t(x)-\Phi_{\tilde{X}}^t(x)\|\leq \varepsilon t \exp{\left(\Lip(X)t\right)},$
where $\Lip(X)$ is the Lipschitz constant of $X$.
\end{proposition}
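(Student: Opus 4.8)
The plan is to estimate the growth of the difference between the two flows using a Gronwall-type argument. Writing $e(t) = \Phi_X^t(x) - \Phi_{\tilde X}^t(x)$ with $e(0)=0$, I would differentiate (or equivalently integrate the defining ODEs) to obtain
\begin{equation*}
\dot e(t) = X(\Phi_X^t(x)) - \tilde X(\Phi_{\tilde X}^t(x)).
\end{equation*}
The key algebraic move is to add and subtract the intermediate term $X(\Phi_{\tilde X}^t(x))$, splitting the right-hand side into $\bigl(X(\Phi_X^t(x)) - X(\Phi_{\tilde X}^t(x))\bigr) + \bigl(X(\Phi_{\tilde X}^t(x)) - \tilde X(\Phi_{\tilde X}^t(x))\bigr)$. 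The first bracket is controlled by the Lipschitz constant of $X$, giving a bound $\Lip(X)\,\|e(t)\|$, while the second is bounded by $\varepsilon$ uniformly in virtue of~\cref{eq:supapprox}.

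Next I would pass to the norm of the integrated form. Since $e(t) = \int_0^t \dot e(\tau)\,\ddiff\tau$, the triangle inequality together with the two estimates above yields the integral inequality
\begin{equation*}
\|e(t)\| \leq \varepsilon t + \Lip(X)\int_0^t \|e(\tau)\|\,\ddiff\tau.
\end{equation*}
This is precisely the setting for Gronwall's inequality. Applying the integral form of Gronwall's lemma to the nondecreasing forcing term $\varepsilon t$ produces $\|e(t)\| \leq \varepsilon t \exp(\Lip(X)t)$, which is the claimed bound.

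I expect the main technical subtlety, rather than a genuine obstacle, to be the regularity bookkeeping: one must ensure the flows $\Phi_X^t$ and $\Phi_{\tilde X}^t$ exist and are absolutely continuous in $t$ so that differentiation under the integral and the fundamental theorem of calculus are legitimate. Since $X,\tilde X\in\Lip(\mathbb{R}^n,\mathbb{R}^n)$, the Picard--Lindel\"of theorem guarantees existence and uniqueness of the flows for all finite $t$, and Lipschitz vector fields yield locally absolutely continuous trajectories, so the derivative computation is justified almost everywhere, which suffices for the integral inequality. The only care needed is to apply the Lipschitz bound of $X$ (not $\tilde X$) at the intermediate point, matching the exponent $\Lip(X)t$ that appears in the statement.
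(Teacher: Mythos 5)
Your proposal is correct and follows essentially the same route as the paper's proof: the same add-and-subtract of the intermediate term $X(\Phi_{\tilde X}^t(x))$, the same integral inequality $\|e(t)\|\leq \varepsilon t + \Lip(X)\int_0^t\|e(\tau)\|\dint\tau$, and the same conclusion via Gronwall's lemma. The paper works directly with the integral equations rather than differentiating first, but this is a cosmetic difference; your extra remarks on existence and absolute continuity of the flows are sound bookkeeping that the paper leaves implicit.
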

\begin{proof}
We consider the integral equations associated to the ODEs $\dot{x}(t) = X(x(t))$ and $\dot{\tilde{x}}(t) = \tilde{X}(\tilde{x}(t))$ and study the difference of their solutions both with the same initial condition $x\in\mathbb{R}^n$

\begin{align*}
\|\Phi^t_X(x)-\Phi^t_{\tilde{X}}(x)\| &=\left\| x + \int_0^t X( \Phi_X^s(x))\dint s - x - \int_0^t \tilde{X}(\Phi_{\tilde{X}}^s(x))\dint s \right\|\\
&\leq \int_0^t \left\| X(\Phi_X^s(x))-\tilde{X}(\Phi_{\tilde{X}}^s(x)) \right\|\dint s\\
&= \int_0^t \left\| X(\Phi_X^s(x))- X(\Phi_{\tilde{X}}^s(x))+ X(\Phi_{\tilde{X}}^s(x)) - \tilde{X}(\Phi_{\tilde{X}}^s(x)) \right\|\dint s \\
&\leq\Lip(X)\int_0^t\|\Phi_X^s(x)-\Phi_{\tilde{X}}^s(x)\|\dint s + \varepsilon t.
\end{align*}
Then we  conclude that
\[
\|\Phi^t_X(x)-\Phi^t_{\tilde{X}}(x)\|\leq \varepsilon t \exp{\left(\Lip(X)t\right)}.
\]
applying Gronwall's inequality.
\end{proof}
A particular consequence of this proposition is that if for every $\varepsilon>0$ there is an $\tilde{X}\in\mathcal{F}$ making~\cref{eq:supapprox} true, then we can approximate the $T-$flow map of $X$ arbitrarily well using elements of $\mathcal{F}$:
\[
\|\Phi^T_X(x)-\Phi^T_{\tilde{X}}(x)\|\leq \varepsilon T \exp(\Lip(X)T) = c\varepsilon .
\]
Because of this result, we now derive two approximation results for neural networks working at the level of modelling vector fields.
\subsection{Approximation based on a vector field decomposition}
We now aim to show that, for a particularly designed neural network, we can approximate arbitrarily well any continuous function in the $L^p$ norm and any differentiable invertible function in the supremum norm on compact sets. We also mention how to extend this last result to generic continuous functions.

\begin{theorem}\label{thm:switch}
Let $F:\Omega\subset \mathbb{R}^n\rightarrow\mathbb{R}^n$ be a continuous function, with $\Omega\subset\mathbb{R}^n$ a compact set. Suppose that it can be approximated, with respect to some norm $\|\cdot\|$, by a composition of flow maps of $\mathcal{C}^1(\Omega,\mathbb{R}^n)$ vector fields, i.e.\ for any $\varepsilon>0$, $\exists f_1,\ldots,f_k\in \mathcal{C}^1(\Omega,\mathbb{R}^n)$, such that
\begin{equation}\label{eq:hyp}
\|F - \Phi_{f_k}^{h_k}\circ \ldots \circ \Phi_{f_1}^{h_1}\| < \varepsilon.
\end{equation}
Then, $F$ can be approximated arbitrarily well by composing flow maps of gradient and sphere-preserving vector fields, i.e. $\|F - \Phi_{\nabla U^k}^{h_k}\circ \Phi_{X_S^k}^{h_k} \circ \ldots \circ \Phi_{\nabla U^1}^{h_1}\circ \Phi_{X_S^1}^{h_1}\|<\varepsilon$.
\end{theorem}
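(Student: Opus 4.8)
The plan is to reduce the whole statement to a single per-factor task: approximating the flow $\Phi_{f_i}^{h_i}$ of each individual $\mathcal{C}^1$ vector field $f_i$ from~\cref{eq:hyp} by an alternating composition of a sphere-preserving flow followed by a gradient flow. Once this is available factor by factor, I would assemble the global approximation by a telescoping stability argument: a composition of flow maps is Lipschitz with constant the product of the individual Lipschitz constants, so replacing each $\Phi_{f_i}^{h_i}$ by a nearby map perturbs the full composition by an amount controlled by the preceding perturbation and the Lipschitz constants of the remaining factors. Combined with~\cref{pr:prop}, which already guarantees that closeness of vector fields yields closeness of their flows on finite time intervals, this propagates a per-factor error $\delta$ into a global error $C\delta$, with $C$ depending only on the finitely many Lipschitz constants and times. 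Taking $\delta$ small and using the triangle inequality against~\cref{eq:hyp} then gives the claimed bound.

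The core of the argument is the decomposition of an arbitrary $f\in\mathcal{C}^1(\Omega,\mathbb{R}^n)$ into a gradient part and a sphere-preserving part. Writing $X_S$ for a field tangent to the spheres centred at the origin, that is $\langle x, X_S(x)\rangle = 0$ so that $\Phi_{X_S}^t$ preserves $\|x\|$, I would look for a potential $U$ with $f - \nabla U = X_S$, which amounts to solving the radial (Euler-operator) equation $\langle x, \nabla U(x)\rangle = \langle x, f(x)\rangle$. Integrating this along rays gives the explicit potential
\[
U(x) = \int_0^1 \langle x, f(tx)\rangle \dint t,
\]
and differentiating under the integral confirms $\langle x, \nabla U(x)\rangle = \langle x, f(x)\rangle$, hence that $X_S := f - \nabla U$ satisfies $\langle x, X_S(x)\rangle = 0$. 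Thus every $\mathcal{C}^1$ field splits as $f = \nabla U + X_S$, with the two summands of exactly the two required types.

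With the splitting in hand I would invoke a Lie--Trotter estimate: for a single small step, $\Phi_{\nabla U}^h\circ\Phi_{X_S}^h = \Phi_{f}^h + O(h^2)$. Because the hypothesis~\cref{eq:hyp} is insensitive to refinement — each $\Phi_{f_i}^{h_i}$ may be rewritten as an $m$-fold composition $\Phi_{f_i}^{h_i/m}\circ\cdots\circ\Phi_{f_i}^{h_i/m}$ of flows of the same $\mathcal{C}^1$ field — I may assume without loss of generality that all time steps are as small as desired, at the cost of enlarging the number of factors. Replacing each small sub-flow by one splitting pair $\Phi_{\nabla U^i}^{h_i}\circ\Phi_{X_S^i}^{h_i}$ then incurs an error that is summable over the factors and tends to zero as $m\to\infty$, producing an alternating composition of gradient and sphere-preserving flows of precisely the form in the statement.

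The main obstacle I anticipate is regularity rather than the algebra of the splitting. The potential $U$ constructed above is only guaranteed to be $\mathcal{C}^1$ (so that $\nabla U$ is merely continuous) when $f\in\mathcal{C}^1$, whereas both the well-posedness of the flows and the hypotheses of~\cref{pr:prop} want the fields to be at least locally Lipschitz. I would resolve this by a preliminary mollification: approximate $f$ by a smooth field in the supremum norm on the compact set $\Omega$, which by~\cref{pr:prop} costs only an arbitrarily small change in the flow, and for the smoothed field both $\nabla U$ and $X_S$ are $\mathcal{C}^1$, hence Lipschitz on compacta. The remaining care is bookkeeping: ensuring the flows remain in a fixed compact set on which all Lipschitz constants are uniform (extending the fields off $\Omega$ if necessary), and checking that the integrand $\langle x, f(tx)\rangle$ causes no singularity at the origin, where it stays bounded so that $U$ is well defined throughout.
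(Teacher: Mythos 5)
Your proposal is correct and follows essentially the same route as the paper: your ray-integral potential $U(x)=\int_0^1\langle x,f(tx)\rangle \dint t$ is exactly the Presnov decomposition into gradient plus sphere-preserving parts that the paper invokes by citation, and your Lie--Trotter splitting with time-step refinement mirrors the paper's Strang-splitting argument via the Baker--Campbell--Hausdorff formula. The extra care you take (mollification to handle the mere continuity of $\nabla U$ for $C^1$ data, and the explicit Lipschitz telescoping that propagates per-factor errors through the composition) fills in details the paper leaves implicit, but does not constitute a different approach.
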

By sphere-preserving vector field, we mean a vector field $X_S$ having $z^Tz$ as a first integral, i.e.\ such that $z^TX_S(z)=0$ for any $z\in\mathbb{R}^n$. 

The norm $\|\cdot\|$ in~\cref{eq:hyp} can be any norm that is well defined for functions in $\mathcal{C}^1(\Omega,\mathbb{R}^n)$. Two typical choices in the literature are $L^p$ norms and the supremum norm
\begin{equation}\label{eq:convOmega}
\|F - \Phi_{f_k}^{h_k}\circ \ldots \circ \Phi_{f_1}^{h_1}\|:=\sup_{x\in\Omega}\|F(x)-\Phi_{f_k}^{h_k}\circ \ldots \circ \Phi_{f_1}^{h_1}(x)\|.
\end{equation}
Various works, like~\cite{brenier2003p} and~\cite{li2022deep}, have already proven the existence of vector fields $f_1,\ldots,f_k$ making~\cref{eq:hyp} true when $\|\cdot\|$ is the $L^p$ norm and $F$ is a continuous function. Regarding the validity of hypothesis~\cref{eq:hyp} with the norm defined in~\cref{eq:convOmega}, we mention~\cite{teshima2020universal} where the authors have proven that if $F$ is a smooth invertible map with smooth inverse, then the existence of $f_1,\ldots,f_k$ can be guaranteed.

\Cref{thm:switch} is a consequence of the Presnov decomposition of vector fields, introduced in~\cite{presnov2002non}, applied to the $k$ vector fields $f_1,\cdots,f_k\in \mathcal{C}^1(\Omega,\mathbb{R}^n)$ in~\cref{eq:hyp}. The Presnov decomposition is indeed a global decomposition of $\mathcal{C}^1(\mathbb{R}^n,\mathbb{R}^n)$ vector fields into the sum of a gradient and a sphere-preserving vector field. We now prove Theorem~\cref{thm:switch}, and we specialise it to the subfamilies of vector fields we implement to define neural networks.
\begin{proof}
The vector fields $f_1,\ldots,f_k$ are supposed to be continuously differentiable. Thus, they all admit a unique Presnov decomposition, i.e.\ they can be written as
\[
f_i(x) = \nabla U^i(x) + X^i_S(x),
\]
for a scalar function $U_i:\mathbb{R}^n\rightarrow\mathbb{R}$, with $U_i(0)=0$, and a sphere-preserving vector field $X^i_S$. In general the two vector fields $\nabla U^i(x)$ and $X_S^i(x)$ do not commute, i.e.\ the Jacobi-Lie bracket $[\nabla U^i,X_S^i]$ is not identically zero. However, because of the Baker-Campbell-Hausdorff formula (see e.g.~\cite[Section III.4.2]{geomBook}), as in splitting methods (see e.g.~\cite{mclachlan2002splitting}) we can proceed with an approximation of the form
\[
\Phi^h_{f_i} = \Phi^{h}_{\nabla U^i} \circ \Phi^{h}_{X_S^i}+ \mathcal{O}(h^2).
\] 
\bl{This last equality is the local error of the Lie Trotter splitting: local order 2 and global order 1 under the hypothesis that guarantees convergence\footnote{\bl{We prove the convergence of the Lie-Trotter splitting formula for Lipschitz regular vector fields in section \cref{se:convSplitting} of the supplementary material. Such proof extends similarly to other splitting strategies.}}. We recall that $\Phi_{f_i}^{h} = \Phi_{f_i}^{h/n}\circ \ldots \Phi_{f_i}^{h/n}$, where the flow maps are composed $n$ times}. Thus, up to choosing $n$ large enough, we can approximate as accurately as desired $\Phi_{f_i}^h$ with the composition of flow maps of sphere-preserving and gradient vector fields. This concludes the proof.
\end{proof}
Similar reasoning can be extended to other vector field decompositions, e.g. the Helmholtz decomposition, as long as $f_1,\ldots,f_k$ admit such a decomposition. In~\cref{se:lipschitz}, we adopt gradient vector fields whose flow maps expand and contract distances to obtain 1-Lipschitz neural networks. We now specialise~\cref{thm:switch} to the vector fields we use to model such neural networks. 
\begin{corollary}\label{co:coro}
Consider the same assumptions of~\cref{thm:switch}, and in particular the inequality~\cref{eq:hyp}. Then, we can approximate $F$ arbitrarily well by composing flow maps of expansive, contractive and sphere-preserving vector fields.
\end{corollary}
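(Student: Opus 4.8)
The plan is to start from the conclusion of \cref{thm:switch} and refine it by splitting each gradient vector field into an expansive and a contractive piece. By \cref{thm:switch}, $F$ is approximated to within any tolerance by a composition of flow maps of the gradient fields $\nabla U^i$ and the sphere-preserving fields $X_S^i$. The sphere-preserving fields already belong to the target family, so it remains only to approximate each $\Phi_{\nabla U^i}^{h_i}$ by compositions of flows of expansive and contractive vector fields; the resulting perturbation of the overall composition is then controlled, as before, by \cref{pr:prop} together with the stability of composing close flow maps.

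The central observation is that a gradient field can be written as the sum of an expansive and a contractive gradient field. Fix a compact set $K\supseteq\Omega$ containing all the trajectories involved over the finite time horizon, and set $M_i:=\sup_{z\in K}\|\nabla^2 U^i(z)\|$, which is finite by smoothness of $U^i$. I would then write
\[
\nabla U^i(z) = \nabla\!\left(U^i(z)-\frac{M_i}{2}\|z\|^2\right) + \nabla\!\left(\frac{M_i}{2}\|z\|^2\right) =: X_c^i(z) + X_e^i(z).
\]
Here $X_e^i(z)=M_i z$ has the symmetric positive-definite Jacobian $M_i I$, so its flow $\Phi_{X_e^i}^t(z)=e^{M_i t}z$ uniformly expands distances, whereas $X_c^i$ has Hessian $\nabla^2 U^i - M_i I\preceq 0$ on $K$, so its flow is non-expansive there. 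Both summands are gradient fields and therefore admissible as an expansive and a contractive vector field, respectively.

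Since $X_e^i$ and $X_c^i$ do not commute in general, I would reuse verbatim the splitting argument from the proof of \cref{thm:switch}: the Strang-type composition
\[
\Phi_{\nabla U^i}^{h} = \Phi_{X_c^i}^{h/2}\circ \Phi_{X_e^i}^{h}\circ \Phi_{X_c^i}^{h/2} + \mathcal{O}(h^3)
\]
follows from the Baker-Campbell-Hausdorff formula, and writing $\Phi_{\nabla U^i}^{h_i}=\Phi_{\nabla U^i}^{\alpha_m h_i}\circ\cdots\circ\Phi_{\nabla U^i}^{\alpha_1 h_i}$ with $\sum_j\alpha_j=1$ and the $\alpha_j$ small makes the accumulated splitting error arbitrarily small. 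Substituting these local approximations into the composition from \cref{thm:switch} and bounding the propagated error with \cref{pr:prop} yields the corollary.

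The step I expect to be most delicate is ensuring that $X_c^i$ is genuinely contractive along the whole trajectory and not only on $\Omega$, since the Hessian bound is a local statement while the flow may temporarily leave $\Omega$. I would address this by choosing $K$ via finite-time a priori bounds on the flows, so that $M_i$ is a uniform Hessian bound wherever the approximating trajectories travel, or equivalently by smoothly modifying $U^i$ outside $\Omega$ so that $\nabla^2 U^i\preceq M_i I$ holds globally while $\nabla U^i$ is left unchanged on $\Omega$.
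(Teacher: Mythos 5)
Your argument is essentially correct, but it takes a genuinely different route from the paper's. The paper does not split $\nabla U^i$ directly: it first invokes the universal approximation result (\cref{thm:uni}) to replace each $U^i$ by a ridge function $\tilde{U}(z)=\alpha^T\Gamma(Az+b)$ built from $\gamma(x)=\int_0^x\max\{as,s\}\dint s$, and then splits $\nabla\tilde{U}(z)=A^T\diag(\alpha)\Sigma(Az+b)$ into $X_E+X_C$ via the sign decomposition $\alpha=\alpha^+-\alpha^-$, using positive homogeneity of $\sigma$ to absorb $\diag(\alpha^{\pm})^{1/2}$ into the weights and strong convexity of $\gamma$ to get expansivity/contractivity. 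Your ``shift by a large quadratic'' decomposition $\nabla U^i = \bigl(\nabla U^i - M_i z\bigr) + M_i z$ is more elementary: it needs no universal approximation theorem, only a Hessian bound, and it proves the literal statement of \cref{co:coro}. What the paper's longer route buys is that its expansive and contractive pieces have exactly the parametric form $\pm A^T\Sigma(Az+b)$ used as layers in \cref{se:lipschitz}, so the corollary simultaneously establishes expressivity of the actual network families; your $X_c^i$ and $X_e^i$ are abstract gradient fields that would still need a further approximation step to become realizable layers.

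Two small points to tighten. First, the paper's notion of contractive is strict ($\|\Phi^t_X(x)-\Phi^t_X(y)\|<\|x-y\|$ for $t>0$), while your $X_c^i$ only satisfies $\nabla^2 U^i - M_i I\preceq 0$, giving non-expansivity; take $M_i$ strictly larger than $\sup_K\|\nabla^2 U^i\|$ so the shifted Hessian is uniformly negative definite. Second, your global-extension fix is the right one, but note that a cutoff modification generally changes the attainable Hessian bound, so $M_i$ should be chosen as the global Hessian supremum of the modified potential rather than fixed beforehand; with that ordering of choices the one-sided Lipschitz estimate $\langle X_c^i(x)-X_c^i(y),x-y\rangle\leq -\epsilon\|x-y\|^2$ holds for all $x,y\in\mathbb{R}^n$ and the contraction is genuinely global.
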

We first remark that with an expansive vector field we mean a vector field $X$ such that $\|\Phi^t_X(x)-\Phi^t_X(y)\|>\|x-y\|$ for any $t>0$, while by contractive we mean that $\|\Phi^t_X(x)-\Phi^t_X(y)\|<\|x-y\|$. To prove the corollary, we rely on a classical universal approximation theorem with non-polynomial activation functions (see e.g.~\cite{pinkus1999approximation}). For clarity, we report it here.
\begin{theorem}[Universal approximation, \cite{pinkus1999approximation}]\label{thm:uni}
Let $\Omega\subset\mathbb{R}^n$ be a compact set and $U\in\mathcal{C}^1(\mathbb{R}^n)$. Assume $\gamma\in\mathcal{C}^1(\mathbb{R})$ and $\gamma$ is not a polynomial. Then for every $\varepsilon>0$ there is 
\[
\tilde{U}(x) = \boldsymbol{\alpha}^T\Gamma(Ax+b),\quad \Gamma(z) = [\gamma(z_1),\ldots,\gamma(z_n)],
\] 
such that $\sup_{x\in\Omega}\|\tilde{U}(x)-U(x)\|<\varepsilon$,
and 
$
\sup_{x\in\Omega} \|\nabla \tilde{U}(x)-\nabla U(x)\|<\varepsilon.$
\end{theorem}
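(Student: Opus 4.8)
The plan is to recognise the admissible functions $\tilde{U}(x)=\boldsymbol{\alpha}^T\Gamma(Ax+b)$ as exactly the linear span
\[
\mathcal{M}(\gamma)=\mathrm{span}\{\,x\mapsto \gamma(\lambda\cdot x+\theta)\;:\;\lambda\in\mathbb{R}^n,\ \theta\in\mathbb{R}\,\},
\]
since, writing $\lambda_j$ for the $j$-th row of $A$, we have $\boldsymbol{\alpha}^T\Gamma(Ax+b)=\sum_j \alpha_j\,\gamma(\lambda_j\cdot x+\theta_j)$ with the width (number of rows of $A$) arbitrary. Both estimates in the statement are subsumed by a single bound in the norm $\|f\|_{C^1(\Omega)}=\sup_{\Omega}\|f\|+\sup_{\Omega}\|\nabla f\|$, so it suffices to prove that $\mathcal{M}(\gamma)$ is dense in $C^1(\Omega)$. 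I would split this into two density statements chained by the triangle inequality: first, that polynomials are dense in $C^1(\Omega)$ (restrict $U$ to a cube $Q\supseteq\Omega$, mollify to get $U_\delta\to U$ together with $\nabla U_\delta\to\nabla U$ uniformly on $Q$, then approximate the smooth $U_\delta$ in $C^1(Q)$ by a polynomial via simultaneous Weierstrass/Bernstein approximation); and second, that every polynomial lies in the $C^1(\Omega)$-closure of $\mathcal{M}(\gamma)$. The first is classical, so the whole content sits in the second.

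For the second statement I would first treat $\gamma\in C^\infty$ and exploit differentiation in the ridge parameters. Fix a direction $\lambda$ and a base shift $\theta_0$, and consider $w\mapsto \gamma(w\,\lambda\cdot x+\theta_0)$; its $k$-th divided differences in $w$ at $w=0$ are finite linear combinations of shifts $\gamma((ih\lambda)\cdot x+\theta_0)$, hence elements of $\mathcal{M}(\gamma)$, and as the step $h\to 0$ they converge to $\gamma^{(k)}(\theta_0)\,(\lambda\cdot x)^k$. Because $\gamma$ is smooth, this convergence holds in $C^1(\Omega)$, i.e.\ together with the spatial gradients, uniformly on the compact set. Choosing $\theta_0$ with $\gamma^{(k)}(\theta_0)\neq 0$ for all $k$ then places every power $(\lambda\cdot x)^k$ in $\overline{\mathcal{M}(\gamma)}$, and since the span of $\{(\lambda\cdot x)^k:\lambda\in\mathbb{R}^n\}$ is the entire space of homogeneous polynomials of degree $k$ (polarisation of symmetric forms, so finitely many directions suffice), all polynomials follow.

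Two ingredients remain. The existence of a shift $\theta_0$ at which no derivative of a non-polynomial $\gamma$ vanishes is a Baire-category fact about the closed zero sets of the successive derivatives $\gamma^{(k)}$, which I would invoke as a lemma. To drop the smoothness assumption and treat merely $\gamma\in C^1$, I would mollify: $\gamma_\rho=\gamma*\rho$ with a smooth compactly supported $\rho$ is $C^\infty$ and, for a suitable $\rho$, still not a polynomial; moreover each shifted ridge $\gamma_\rho(\lambda\cdot x+\theta)$ is a $C^1(\Omega)$-limit of Riemann sums of shifts $\gamma(\lambda\cdot x+\theta-s)$, so $\mathcal{M}(\gamma_\rho)\subseteq\overline{\mathcal{M}(\gamma)}^{\,C^1(\Omega)}$ and the smooth case transfers.

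The main obstacle is the simultaneous control of values and gradients, i.e.\ carrying every approximation step in the $C^1$ norm rather than in $C^0$. Plain universal approximation needs only uniform convergence of values, whereas here the divided-difference limits, the mollification, and the polynomial approximation of $U$ must all converge together with their derivatives. The delicate point is that for $\gamma\in C^\infty$ the divided differences in $w$ converge in $C^1(\Omega)$ — one differentiates the uniformly convergent family under the limit on the compact $\Omega$, using that $\gamma$ has locally bounded derivatives over the bounded range of $\lambda\cdot x$ — and it is precisely this upgrade that turns the classical non-polynomial density theorem into the gradient-aware statement needed here.
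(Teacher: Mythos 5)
The paper offers no proof of this theorem: it is quoted verbatim (as Theorem~4.1-type material) from the cited survey of Pinkus \cite{pinkus1999approximation}, and your proposal is essentially a faithful reconstruction of the proof given there --- reduction to density of the ridge span $\mathcal{M}(\gamma)$ in the $C^1(\Omega)$ norm, divided differences in the weight parameter to capture $(\lambda\cdot x)^k\gamma^{(k)}(\theta_0)$, the Baire-category choice of $\theta_0$ with all derivatives nonvanishing, polarisation to recover homogeneous polynomials, and mollification plus Riemann sums to pass from smooth $\gamma$ to merely $C^1$ activations. The argument is correct; the only step you assert without detail --- that some mollification $\gamma\ast\rho$ of a non-polynomial $\gamma$ is again not a polynomial --- is a genuine lemma (if $\gamma\ast\rho$ were polynomial for \emph{every} test function $\rho$, a category/degree argument forces $\gamma$ itself to be a polynomial), and it is treated as such in Pinkus, so flagging it as an invoked lemma is acceptable rather than a gap.
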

We now prove~\cref{co:coro}.
\begin{proof}
The proof follows the same reasoning of the one of~\cref{thm:switch}. Indeed, we first decompose each of the $f_1,\ldots,f_k$ of equation~\cref{eq:hyp} via the Presnov decomposition as $f_i(x) = \nabla U^i(x)+X_S^i(x)$. Then, we approximate each of the $U^i$ functions thanks to~\cref{thm:uni}. To ease the notation, we focus on one of the $f_i$ and denote it with $f$ from now on in the proof.

Let $U:\mathbb{R}^n\rightarrow\mathbb{R}$ and $X_S$ be so that $f(x) = \nabla U(x) + X_S(x)$. Choose then $\sigma(x) = \max\{ax,x\}$, $a\in (0,1)$, and $\gamma(x) = \int_0^x \sigma(s)\dint s$. Since $\gamma$ is not a polynomial and it is continuously differentiable, \cref{thm:uni} for any $\varepsilon>0$ ensures the existence of a function 
\[
\tilde{U}(x) = \boldsymbol{\alpha}^T\Gamma(Ax+b),
\]
that satisfies $\sup_{x\in\Omega}\|U(x)-\tilde{U}(x)\|<\varepsilon$ and $\sup_{x\in\Omega}\|\nabla U(x)-\nabla \tilde{U}(x)\|<\varepsilon$. We now split $\nabla \tilde{U}(x) = A^T\diag(\boldsymbol{\alpha})\Sigma(Ax+b)$ into a contractive and an expansive part, exploiting the two following properties of $\sigma$ and $\gamma$:
\begin{enumerate}
\item $\sigma$ is positively homogeneous, i.e.\ $\sigma(\lambda s) = \lambda\sigma(s)$ for $\lambda,s\in\mathbb{R}$, $\lambda\geq 0$,
\item $\gamma$ is strongly convex.
\end{enumerate}
We decompose $\boldsymbol{\alpha}$  as $\boldsymbol{\alpha}^+-\boldsymbol{\alpha}^-$, where $(\boldsymbol{\alpha}^+)_{k} = \max\{0,\alpha_k\}$, $(\boldsymbol{\alpha}^-)_k = -\min\{0,\alpha_k\}$ with $k=1,\ldots,n$. Because of the positive homogeneity, $\nabla \tilde{U}(x)$ can be rewritten as
\[
\nabla \tilde{U}(x) =A_1^T\Sigma(A_1x+b_1) - A_2^T\Sigma(A_2x+b_2) = X_E(x) + X_C(x)
\]
where
\[
A_1 = \diag(\boldsymbol{\alpha}^+)^{\frac{1}{2}}A,\,A_2= \diag(\boldsymbol{\alpha}^-)^{\frac{1}{2}}A ,\,b_1 = \diag(\boldsymbol{\alpha}^+)^{\frac{1}{2}}b,\,b_2 = \diag(\boldsymbol{\alpha}^-)^{\frac{1}{2}} b.
\]
Because of the strong convexity of $\gamma$, we have
\[
\frac{1}{2}\frac{\ddiff}{\ddiff t}\left\|z(t)-y(t)\right\|^2 = \langle X_E(z(t))-X_E(y(t)),z(t)-y(t)\rangle > 0
\]
with $z(t) = \Phi_{X_E}^t(z_0)$ and $y(t) = \Phi_{X_E}^t(y_0)$. This means that the flow of $X_E$ is an expansive map. A similar reasoning shows that $X_C$ has a contractive flow map. We can now conclude as in~\cref{thm:switch} since we have shown that every $f_i$ in~\cref{eq:hyp} can be approximated arbitrarily well as $f_i(x) \approx X_E^i(x) + X_C^i(x) + X_S^i(x)$.
\end{proof}
As for the expansive and the contractive vector fields, to define neural networks based on~\cref{co:coro} one needs to parameterise the vector field $X_S^i(z)$ that preserves spheres. Many possibilities are available, and we report a couple of them. The first is
\[
\tilde{X}_S(z) = P(z)B^T\Sigma(Cz+d),\,\,B,C\in\mathbb{R}^{m\times n},\,d\in\mathbb{R}^m,\, P(z) = I_n - \frac{zz^T}{\|z\|^2},
\]
where $P(z):T_z\mathbb{R}^n\rightarrow T_zS^2_{\|z\|}$ is the orthogonal projection on the space $\langle z\rangle^{\perp}$ and $I_n\in\mathbb{R}^{n\times n}$ is the identity matrix. Another option is
\[
\tilde{X}_S(z) = \Lambda(z,\theta)z
\] 
where $\Lambda(z,\theta) = A(z,\theta) - A(z,\theta)^T\in\mathbb{R}^{n\times n}$ with $A$ being a strictly upper triangular matrix whose entries are modelled by $B\Sigma(Cx+b)\in\mathbb{R}^N$, $B\in\mathbb{R}^{N\times m}$, $C\in\mathbb{R}^{m\times n}$, $b\in\mathbb{R}^m$, $N= \frac{n(n-1)}{2}$. These two possibilities allow us to approximate any sphere-preserving vector field arbitrarily because of classical universal approximation results, like the one mentioned in~\cref{thm:uni}. We prefer, for practical reasons, the second one in the experiments reported in~\cref{se:expReg} of the supplementary material. %On the other hand, in the experiment reported in~\cref{se:lipschitz} we replace the part $X_S^i(z)+X_E^i(z)$ of the vector field decomposition with vector fields of the form $\Sigma(Az+b)$. This choice gives good experimental performance. The primary motivation behind this replacement is that, in those experiments, the linear transformations of the inputs are based on convolution operations, and the extension of sphere-preserving vector fields to that setting results in involved architectures.

We now summarise the results presented in the context of neural networks. Suppose that $\|F - \Phi_{f_k}^{h_k} \circ \ldots \circ \Phi_{f_1}^{h_1}\|<\varepsilon $
and that $f_i \approx \tilde{f}_i = X^i_C + X^i_E + \tilde{X}_S^i$ for $i=1,\ldots,k$. In~\cref{thm:switch} we have worked with the exact flows of the vector fields. However, most of the times these are not available and hence a numerical approximation is needed. This is exactly equivalent to applying a splitting numerical integrator (see e.g.~\cite[Chapter 2]{geomBook} or~\cite{mclachlan2002splitting}) to approximate the $h_i$-flow map of $\tilde{f}_i$ (and hence also of $f_i$) and get 
\begin{equation}\label{eq:nn}
F(x) \approx \mathcal{N}(x) = \Psi^{h_k}_{X^k_C} \circ \Psi^{h_k}_{X^k_E} \circ \Psi^{h_k}_{X_S^k} \circ \ldots \circ \Psi^{h_1}_{X^1_C} \circ \Psi^{h_1}_{X^1_E} \circ \Psi^{h_1}_{X_S^1}(x).
\end{equation}
Here we denote with $\Psi^h_f$ a discrete approximation of the exact flow $\Phi^h_f$ and $\mathcal{N}$ is the neural network that approximates the target function $F$. Because of~\cref{co:coro} and basic theory of numerical methods for ODEs, $\mathcal{N}$ hence can approximate arbitrarily well $F$ in the norm $\|\cdot\|$. \newline\newline 
We remark that the neural network $\mathcal{N}$ defined in~\cref{eq:nn} does not change the dimensionality of the input point, i.e.\ it is a map from $\mathbb{R}^n$ to itself. However, usually ResNets allow for dimensionality changes thanks to linear lifting and projection layers. One can extend all the results presented in this section to the dimensionality changes, where instead of defining the whole network as the composition of discrete flow maps, just the ``dynamical blocks" are characterised in that way, as represented in~\cref{fig:changeDim}.
\begin{figure}[ht!]
    \centering
    \includegraphics[trim={0 0 5cm 0},clip,width=.9\textwidth]{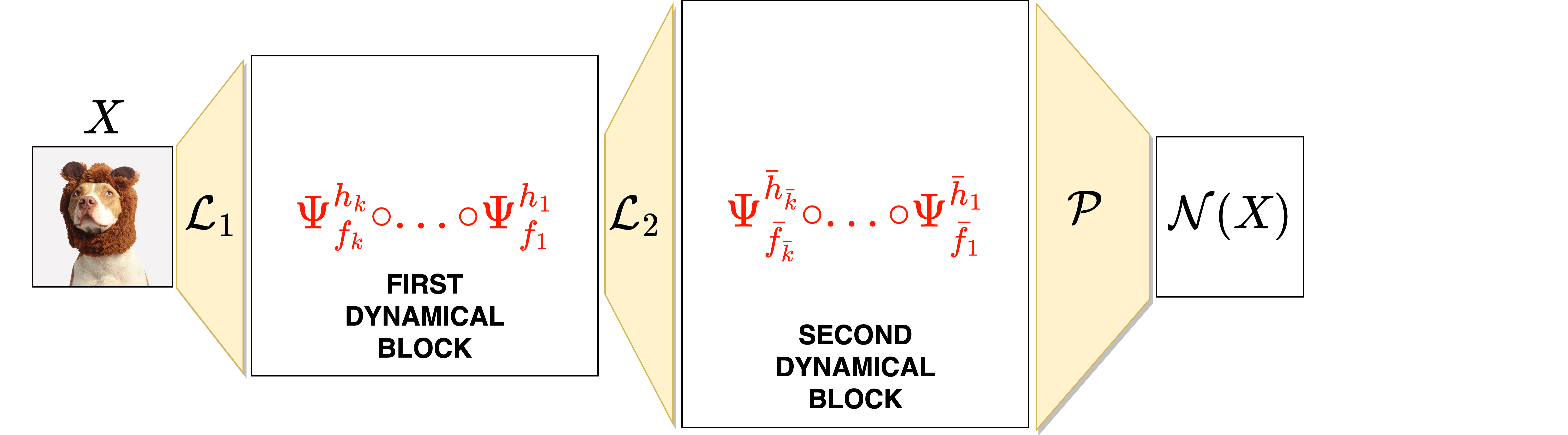}
    \caption{Representation of a ResNet made of two dynamical blocks, two lifting layers $\mathcal{L}_1$, $\mathcal{L}_2$ and a final projection layer $\mathcal{P}$.}
    \label{fig:changeDim}
\end{figure}
Consequently, one can extend the results presented in~\cite{zhang2020approximation}. In particular, one can show that by composing flow maps of sphere-preserving and gradient vector fields, generic continuous functions can also be approximated in the sense of~\cref{eq:convOmega}, \bl{as long as linear lifting and projection layers are allowed in the network}. \newline\newline 
In~\cref{se:expReg} of the supplementary material we show some numerical experiments where some unknown dynamical systems and some target functions are approximated starting from the above results. We now introduce another way to get expressivity results starting from the continuous-in-time interpretation of neural networks.
\subsection{Approximation based on Hamiltonian vector fields}
Augmenting the dimensionality of the space where the dynamics is defined is a typical technique for designing deep neural networks, see~\cref{fig:changeDim}. Based on this idea, we now study the approximation properties of networks obtained by composing flow maps of Hamiltonian systems. For an introductory presentation of Hamiltonian systems, see~\cite{leimkuhler2004simulating}. 

We now show that for any function $F$ for which hypothesis~\cref{eq:hyp} holds, one can approximate $F$ arbitrarily well, in the same function norm, by composing flow maps of Hamiltonian systems and linear maps. Consequently, symplectomorphisms like those defined by SympNets (\cite{jin2020sympnets}) can also be used approximate $F$ arbitrarily well.

This result relies on the embedding of a vector field $f\in\mathcal{C}^1(\mathbb{R}^n,\mathbb{R}^n)$ into a Hamiltonian vector field on $\mathbb{R}^{2n}$. To do so, we first define the linear map $
L:\mathbb{R}^n\rightarrow \mathbb{R}^{2n}$, as $z\mapsto L(z)=(z,0).$
We then introduce the function $H_f(z,p) = p^Tf(z)$, where $p$ is the conjugate momentum of $z$. The gradient of such a function is
\[
\nabla H_f(z,p) = \begin{bmatrix} \frac{\partial \left[ p^Tf(z)\right]}{\partial z} \\ f(z) \end{bmatrix}.
\]
This implies that the Hamiltonian ODEs associated to $H_f$ are 
\[
\begin{bmatrix}
\dot{z}\\
\dot{p}
\end{bmatrix} = X_{H_f}(z,p) = \mathbb{J}\nabla H_f(z,p) = \begin{bmatrix}
f(z) \\ -\frac{\partial  \left[ p^Tf(z)\right]}{\partial z}
\end{bmatrix}.
\]
Hence, we have $\Phi_{f}^h = P \circ \Phi_{X_{H_{f}}}^h \circ L $
where $P$ is the projection on the first component $\mathbb{R}^{2n}\ni (z,p)\mapsto z\in\mathbb{R}^n$. 
%This construction, together with the fact that $P\circ L = \mathrm{id}_{\mathbb{R}^n}$, implies that 
%\[
%\|F - \Phi_{f_k}^{h_k}\circ \ldots \circ \Phi_{f_1}^{h_1}\|<\varepsilon \implies \|F - P \circ \Phi_{X_{H_{f_k}}}^{h_k} \circ \ldots \circ \Phi_{X_{H_{f_1}}}^{h_1}\circ L\|<\varepsilon.
%\]
\bl{This construction, together with hypothesis~\cref{eq:hyp}, implies that 
\[
\|F - P\,\circ\, \Phi_{X_{H_{f_k}}}^{h_k}\,\circ\,L\,\circ\,P\,\circ\,\Phi_{X_{H_{f_{k-1}}}}^{h_{k-1}}\,\circ\,L \circ \ldots L\,\circ \,P \,\circ\,\Phi_{X_{H_{f_1}}}^{h_1}\,\circ\,L\|<\varepsilon.
\]
}
%It is also possible to replace the inner terms $L\circ P$ with flow maps of other Hamiltonian systems. Indeed, we notice that $L\circ P(z,p) = L(z)=(z,0)$, so $(z,p)\mapsto (z,0)$. This effect can also be obtained by integrating for time $t=1$ the following dynamical system
%\[
%\begin{cases}
%\dot{\bar{z}}(t) = 0\\
%\dot{\bar{p}}(t) = -p\\
%\bar{z}(0)=z,\,\, \bar{p}(0)=p.
%\end{cases}
%\]
%Indeed, the solution of such ODE is $(\bar{z}(t),\bar{p}(t))=(z,p-pt)$ and hence at time $t=1$ it maps $(z,p)$ into $(z,0)$. This dynamical system is also Hamiltonian, with linear Hamiltonian function $\bar{H}(\bar{z},\bar{p})=p^T\bar{z}$. It is important to remark that the Hamiltonian system we need depends on the initial condition, particularly on $p$. Thus, for every $f_i$, one must introduce an additional Hamiltonian $\bar{H}_i$ that does this service. We can conclude that
%\[
%\|F - P\,\circ\, \Phi_{X_{H_{f_k}}}^{h_k}\,\circ\,\Phi_{X_{\bar{H}_{k-1}}}^1\,\circ\,\Phi_{X_{H_{f_{k-1}}}}^{h_{k-1}}\,\circ \ldots \circ \Phi_{X_{\bar{H}_1}}^1 \,\circ\,\Phi_{X_{H_{f_1}}}^{h_1}\,\circ\,L\|<\varepsilon.
%\]
One could be interested in such a lifting procedure and hence work with Hamiltonian systems because discretising their flow maps with symplectic methods might generate more stable networks or, as highlighted in~\cite{galimberti2021hamiltonian}, could prevent vanishing and exploding gradient problems.

\section{Adversarial robustness and Lipschitz neural networks}\label{se:lipschitz}
In this section, we consider the problem of classifying points of a set $\mathcal{X}\subset\mathbb{R}^n$. More precisely, given a set $\mathcal{X}=\cup_{i=1}^C \mathcal{X}_i$ defined by the disjoint union of $C$ subsets $\mathcal{X}_1,\ldots,\mathcal{X}_C$, we aim to approximate the function $\ell:\mathcal{X}\rightarrow \{1,\ldots,C\}$ that assigns all the points of $\mathcal{X}$ to their correct class, i.e.\ $\ell(x) = i$ for all $x\in\mathcal{X}_i$ and all $i=1,\ldots,C$. Because of their approximation properties, one can often choose neural networks to solve classification problems, i.e.\ as models that approximate the labelling function $\ell$. On the other hand, there is increasing evidence that trained neural networks are sensitive to well-chosen input perturbations called adversarial attacks. The first work that points this out is~\cite{Szegedy2014IntriguingPO} and, since then, numerous others (see e.g.~\cite{madry2018towards,carlini2017towards,Goodfellow2015ExplainingAH}) have introduced both new ways to perturb the inputs (attacks) and to reduce the sensitivity of the networks (defences). We first formalise the problem of adversarial robustness from the mathematical point of view and then derive a network architecture with inherent stability properties.

Let $\mathcal{N}:\mathbb{R}^n\rightarrow\mathbb{R}^C$ be a neural network trained so that the true labelling map $\ell$ is well approximated by $\hat{\ell}(x) = \arg\max_{i=1,\ldots,C}\mathcal{N}(x)_i$ for points $x\in\mathcal{X}$. Furthermore, let us assume 
\[
\|x-y\|\geq 2\varepsilon\quad \forall x,y\in\mathcal{X},\,\,\ell(x)\neq \ell(y)
\]
for some norm $\|\cdot\|:\mathbb{R}^n\rightarrow\mathbb{R}^+$ defined on the ambient space. With this setup, we say the network $\mathcal{N}$ is $\varepsilon-$robust if
\[
\ell(x) = \hat{\ell}(x) = \hat{\ell}(x+\delta),\quad \forall \delta\in\mathbb{R}^n,\,\, \|\delta\|< \varepsilon.
\]
In order to quantify the robustness of $\mathcal{N}$ we, first of all, consider its Lipschitz constant $\Lip(\mathcal{N})$, i.e.\ the smallest scalar value such that
\[
\|\mathcal{N}(x)- \mathcal{N}(y)\|_2\leq \Lip(\mathcal{N})\|x-y\|,\quad \forall x,y\in\mathbb{R}^n,
\]
where $\|\cdot\|_2:\mathbb{R}^C\rightarrow\mathbb{R}^+$ is the $\ell^2$ norm. We also need a way to quantify how certain the network predictions are. A measure of this certainty level is called margin in the literature (see e.g.~\cite{anil2019sorting,bartlett2017spectrally,tsuzuku2018lipschitz}) and it is defined as
\[
\mathcal{M}_{\mathcal{N}}(x) = \mathcal{N}(x)^Te_{\ell(x)} - \max_{j\neq \ell(x)}\mathcal{N}(x)^Te_j,
\]
where $e_{i}$ is the $i-$th vector of the canonical basis of $\mathbb{R}^C$. Combining these two quantities, in~\cite{tsuzuku2018lipschitz} the authors show that if the norm $\|\cdot\|$ considered for $\mathcal{X}$ is the $\ell^2$ norm of the ambient space $\mathbb{R}^n$, then
\begin{equation}
\mathcal{M}_{\mathcal{N}}(x)\geq \sqrt{2}\varepsilon \Lip(\mathcal{N})\implies \mathcal{M}_{\mathcal{N}}(x+\delta)\geq 0\quad\forall \delta\in\mathbb{R}^n,\,\|\delta \|\leq \varepsilon.
\label{eq:marginIneq}
\end{equation}
Hence, for the points in $\mathcal{X}$ where~\cref{eq:marginIneq} holds, the network is robust to perturbations with a magnitude not greater than $\varepsilon$. This result can be extended to generic $\ell^p$ metrics, but, in this section, we focus on the case where $\|\cdot\|$ is the $\ell^2$ metric of $\mathbb{R}^n$ and, from now on, we keep denoting it as $\|\cdot\|$. 

Motivated by inequality~\cref{eq:marginIneq}, we present a strategy to constrain the Lipschitz constant of ResNets to the value of $1$. Differently from~\cite{celledoni2021structure,zakwan2022robust,meunier2021scalable}, we impose such a property on the network without relying only on layers that are 1-Lipschitz. This strategy relies on the ODE-based approach that we are presenting and is motivated by the interest of getting networks that also have good expressivity capabilities. Indeed, we remark that in~\cref{se:approx} we studied the approximation properties of networks similar to those we consider in this section. We conclude the section with extensive numerical experiments for the adversarial robustness with the CIFAR-10 and CIFAR-100 datasets to test the proposed network architectures.
\subsection{Non-expansive dynamical blocks}
Consider a scalar differentiable function $V:\mathbb{R}^n\rightarrow \mathbb{R}$ \bl{that is also strongly convex, i.e. it admits a $\mu>0$ such that
\begin{equation}
\langle \nabla V(x)-\nabla V(y),x-y\rangle \geq \mu \|x-y\|^2,
\label{eq:gradStrongConvex}
\end{equation}
see e.g. \cite[Chapter 6]{hiriart2013convex}. We refer to a function $V$ that is strongly-convex with strong convexity constant $\mu$ as $\mu$-strongly convex.} This said, it follows that the dynamics defined by the ODE 
\begin{equation}\label{eq:contractiveGradFlow}
\dot{x}(t) = -\nabla V(x(t))=X(x(t))
\end{equation}
is contractive, since
\begin{equation}\label{eq:ineq}
\begin{split}
\frac{1}{2}\frac{\ddiff}{\ddiff t}\|x(t)-y(t)\|^2 &= -\langle x(t)-y(t),\nabla V(x(t)) - \nabla V(y(t))\rangle \leq -\mu\|x(t)-y(t)\|^2 \\
\implies \|x(t)-y(t)\|&\leq e^{-\mu t}\|x_0-y_0\|<\|x_0-y_0\|\,\,\forall\,t\geq 0,
\end{split}
\end{equation}
where $x(t)=\Phi^t_X(x_0)$ and $y(t)=\Phi^t_X(y_0)$. A choice for $V$ is $V(x) = \boldsymbol{1}^T\Gamma(Ax+b)$, where $\Gamma(x)=[\gamma(x_1),\ldots,\gamma(x_n)]$,  $\gamma:\mathbb{R}\rightarrow\mathbb{R}$ is a strongly convex differentiable function, and $\boldsymbol{1} = [1,\ldots,1]\in\mathbb{R}^n$. In this way, the network we generate by concatenating explicit Euler steps applied to such vector fields has layers of the type
\[
x\mapsto \Psi_X^h(x)=x - h A^T\Sigma(Ax+b)
\]
where $\Sigma(x) = [\sigma(x_1),\ldots,\sigma(x_n)]$ and $\sigma(s)=\gamma'(s)$.

If we discretise the ODE introduced above reproducing the non-expansive behaviour at a discrete level, as presented for example in~\cite{dahlquist1979generalized,celledoni2021structure,meunier2021scalable}, we get that the numerical flow $\Psi^h_X$ is non-expansive too. Consequently, we can obtain 1-Lipschitz neural networks composing these non-expansive discrete flow maps. A simple way to discretise~\cref{eq:contractiveGradFlow} while preserving non-expansiveness is to use explicit-Euler steps with a small enough step size. Indeed, assuming $\Lip(\sigma)\leq 1$, a layer of the form
\begin{equation}\label{eq:contractiveDyn}
x\mapsto x - hA^T\Sigma(Ax+b),\quad h\leq \frac{2}{\|A\|^2 },\,\,\|A\|=\sup_{\substack{x\in\mathbb{R}^n\\\|x\|=1}}\|Ax\|,
\end{equation}
is guaranteed to be 1-Lipschitz.
We remark that, as highlighted in~\cite{celledoni2021structure,meunier2021scalable}, it is not necessary to require strong convexity for $\gamma$ in order to make $\Phi^t_X$ 1-Lipschitz. Indeed, it is enough to take $\gamma$ convex. However, the strong convexity assumption allows us to include other layers that are not 1-Lipschitz thanks to inequality~\cref{eq:ineq}.

We now shortly present the reasoning behind this statement. Consider another ODE $\dot{x}(t) = Y(x(t))$ where $Y$ is again a vector field on $\mathbb{R}^n$ and suppose that $Y$ is $L$-Lipschitz. Then, we have that
\[
\|\Phi^{\bar{t}}_Y(x_0)-\Phi^{\bar{t}}_Y(y_0)\|\leq \exp{(L{\bar{t}})}\|x_0-y_0\|.
\]
This implies that, given $X$ as in~\cref{eq:contractiveGradFlow}, the map $\Phi_X^{t}\circ \Phi_Y^{\bar{t}} =: C_{{\bar{t}},t}$ satisfies
\begin{equation}\label{eq:nonexpansive}
\|\Phi^t_X(\Phi^{\bar{t}}_Y(x_0))-\Phi^t_X(\Phi^{\bar{t}}_Y(y_0))\|\leq \exp\left(-\mu t + L{\bar{t}}\right)\|x_0-y_0\|,
\end{equation}
so $C_{{\bar{t}},t}$ is Lipschitz continuous and will be 1-Lipschitz if $\exp{(-\mu t + L{\bar{t}})}\leq 1$. This amounts to imposing $L{\bar{t}}\leq \mu t$ on the considered vector fields and time intervals on which corresponding flow maps are active. The map $C_{{\bar{t}},t}$ can be seen as the exact $(t+{\bar{t}})-$flow map of the switching system having a piecewise constant (in time) autonomous dynamics. In particular, such a system coincides with $Y$ for the first time interval $[0,{\bar{t}})$ and with $X$ for the time interval $[{\bar{t}},{\bar{t}}+t)$. \newline\newline 
We could choose $Y$ to be the gradient vector field of an $L$-smooth scalar potential. In other words, we ask for its gradient to be $L$-Lipschitz. \bl{An option is hence
\[
Y(x) = A^T\Sigma(Ax+b),\,\,\|A\|\leq 1,
\]
with $\sigma$ that is $L-$Lipschitz.} Thus, one possible way of building a dynamical block of layers that is 1-Lipschitz is through a consistent discretisation of the switching system
\begin{equation}\label{eq:alternation}
\dot{x}(t) = (-1)^{s(t)}A_{s(t)}^T\Sigma(A_{s(t)}x(t)+b_{s(t)}),
\end{equation}
where $t\mapsto s(t)$ is a piecewise constant time-switching signal that, following~\cref{eq:nonexpansive}, balances the expansive and contractive regimes. In~\cref{eq:alternation}, we are assuming that $\Sigma(z) = [\sigma(z_1),\ldots,\sigma(z_n)]$ with $\sigma$ $1-$Lipschitz and $\gamma(s) = \int_0^s\sigma(t)dt$ strongly convex. In the numerical experiment we report at the end of the section we design $s(t)$ giving an alternation between contractive and possibly non-contractive behaviours. In the following subsection, we present two possible approaches to discretise numerically the system in~\cref{eq:alternation}, mentioning how this extends to more general families of vector fields.

In this subsection, we have worked to obtain dynamical blocks that are non-expansive for the Euclidean metric. In~\cref{se:generalmetric} of the supplementary material we show a way to extend this reasoning to more general metrics defined in the input space.
\subsection{Non-expansive numerical discretisation}\label{se:numDisc}
As presented in the introductory section, it is not enough to have a continuous model that satisfies some property of interest to get it at the network level. Indeed, discretising the solutions to such an ODE must also be done while preserving such a property. One approach that always works is to restrict the step sizes to be sufficiently small so that the behaviour of the discrete solution resembles the one of the exact solution. This strategy can lead to expensive network training because of the high number of time steps. On the other hand, this strategy allows weaker weight restrictions and better performances. We remark how this translates for the dynamical system introduced in~\cref{eq:alternation}, with $\sigma(x) = \max\{x,ax\}$. For that ODE, the one-sided Lipschitz constant of contractive layers is $\mu = a\lambda_{\min}(A^TA)$, $\lambda_{\min}$ being the smallest eigenvalue. Thus, if $A$ is orthogonal, we get $\mu = a$. Under the same orthogonality assumption, the expansive layers in~\cref{eq:alternation} have Lipschitz constant $L=1$ and this allows to specialise the non-expansiveness condition~\cref{eq:nonexpansive} to ${\bar{t}}\leq at$. Thus, if we impose such a relationship and perform sufficiently small time steps, also the numerical solutions will be non-expansive.\newline\newline 
However, frequently smarter choices of discrete dynamical systems can lead to leaner architectures and faster training procedures. We focus on the explicit Euler method for this construction, although one can work with other numerical methods, like generic Runge-Kutta methods, as long as the conditions we derive are adjusted accordingly. We concentrate on two time steps applied to equation~\cref{eq:alternation}, but then the reasoning extends to every pair of composed discrete flows and to other families of vector fields. Let
\begin{equation}
\begin{split}
    \tilde{\Psi}^{h_1}(x) &= x - h_1 A_c^T\Sigma(A_cx+b_c) =: x-h_1 X(A_c,b_c,x)\\
    \Psi^{h_2}(x) &= x + h_2 A_e^T\Sigma(A_ex+b_e) =: x+ h_2 X(A_e,b_e,x).
\end{split}
\label{eq:gradientAlternationRegime}
\end{equation}
\bl{We remark that here the subscripts $c$ and $e$ stand for contractive and expansive respectively.} The condition we want to have is that the map $F_{h}=\tilde{\Psi}^{h_1}\circ \Psi^{h_2}$ is 1-Lipschitz, or at least that this is true when $A_c$, $A_e$ and $\Sigma$ satisfy some well-specified properties. We first study the Lipschitz constant of both the discrete flow maps and then upper bound the one of $F_h$ with their product. We take two points $x,y\in\mathbb{R}^n$, define $\delta X(A_c,b_c,x,y) = X(A_c,b_c,y)-X(A_c,b_c,x)$, and proceed as follows

\begin{align*}
\|\tilde{\Psi}^{h_1}(y)-\tilde{\Psi}^{h_1}(x)\|^2 %&= \langle y-x-h_1\delta X(P,p,x,y), y-x-h_1\delta X(P,p,x,y)\rangle \\
&= \|y-x\|^2 - 2h_1\langle y-x, \delta X(A_c,b_c,x,y)\rangle +h_1^2\|\delta X(A_c,b_c,x,y)\|^2 \\
&\leq \|y-x\|^2 - 2h_1\lambda_{\min}(A_c^TA_c)a\|y-x\|^2 + h_1^2\|A_c\|^4\|y-x\|^2,
\end{align*}
\bl{where the last inequality is because we consider} $\sigma = \max\{ax,x\}$. In the experiments we present at the end of the section, we assume all the weight matrices to be orthogonal, hence $\lambda_{\min}(A_c^TA_c)=1$. Multiple works support this weight constraint as a way to improve the generalisation capabilities, the robustness to adversarial attacks and the weight efficiency (see e.g.~\cite{wang2020orthogonal,trockman2021orthogonalizing}). We will detail how we get such a constraint on convolutional layers in the numerical experiments.
\begin{figure}[ht!]
    \centering
\begin{subfigure}{.49\textwidth}
    \centering
    \includegraphics[width=.7\textwidth]{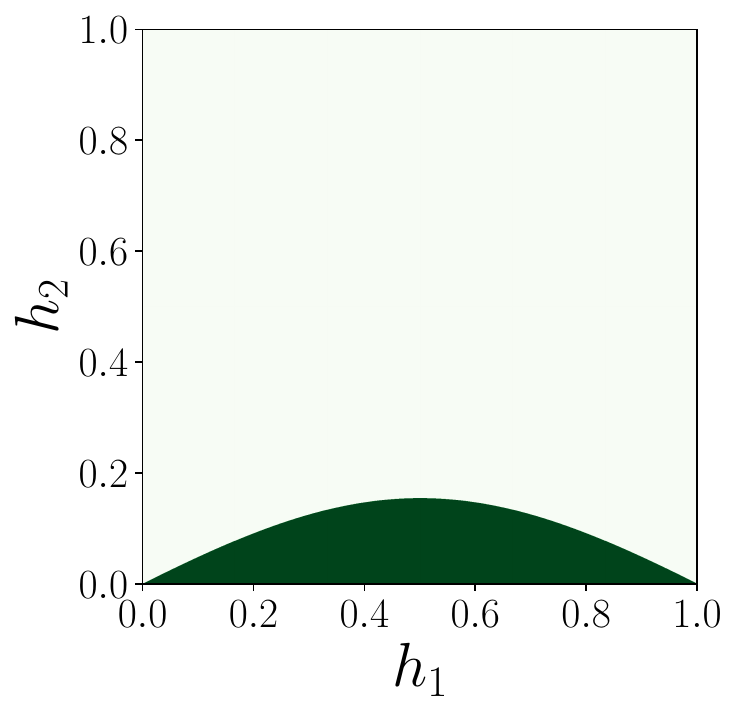}
    \caption{Case $S=1$.}
    \label{fig:regionS1}
\end{subfigure}
\hfill
\begin{subfigure}{.49\textwidth}
    \centering
    \includegraphics[width=.7\textwidth]{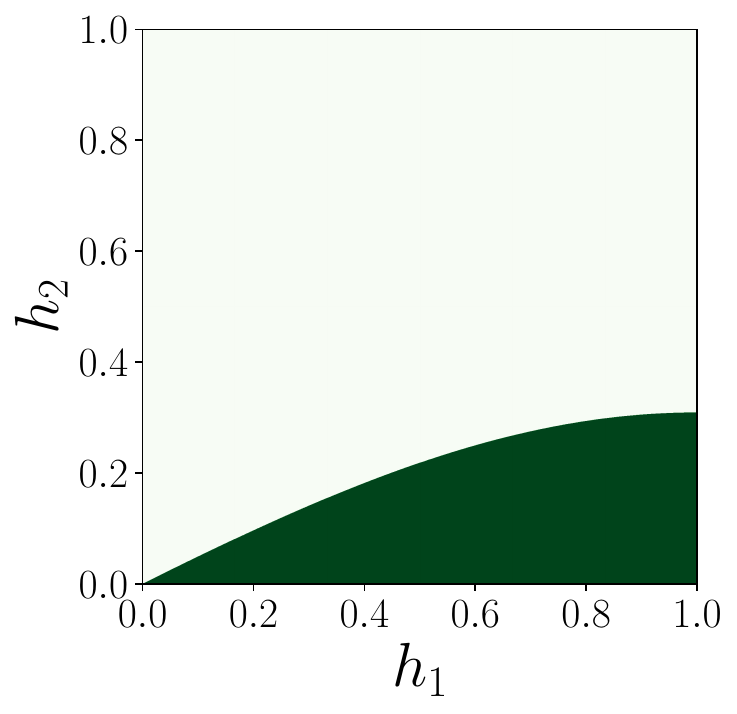}
    \caption{Case $S=2$.}
    \label{fig:regionS2}
\end{subfigure}
\caption{Representation of the non-expansiveness region~\cref{eq:contractiveRegion} for the choice $a=0.5$.}
\end{figure}

The orthogonality of $A_c$ implies $\|A_c\|=1$ and hence we get that a Lipschitz constant of $\tilde{\Psi}^{h_1}$ is $L_1 = \sqrt{1-2h_1a+h_1^2}$. For the expansive flow map $\Psi^{h_2}$, we have
\begin{equation}
\begin{split}
\|\Psi^{h_2}(y)-\Psi^{h_1}(x)\|&\leq \|x-y\|+ h_2\Lip\left(A_e^T\Sigma(A_ez+b_e)\right)\|x-y\|\\
&\leq (1+h_2)\|x-y\|
\end{split}
\label{eq:expLipschitz}
\end{equation}
under the orthogonality assumption for $A_e$. The same result holds also if we just have $\|A_e\|\leq 1$. This leads to a region in the $(h_1,h_2)-$plane where $L_1\cdot L_2\leq 1$ that can be characterised as follows
\begin{equation}\label{eq:contractiveRegion}
\mathcal{R} = \{(h_1,h_2)\in [0,1]^2:\;(1+h_2)\sqrt{1-2h_1a+h_1^2}\leq 1\}.
\end{equation}
This is represented in~\cref{fig:regionS1} for the case $a=0.5$, that is the one used also in the numerical experiment for adversarial robustness. Thus, we now have obtained a way to impose the 1-Lipschitz property on the network coming from the discretisation of the ODE~\cref{eq:alternation}. It is clear that the result presented here easily extends to different time-switching rules (i.e.\ a different choice of $s(t)$), as long as there is the possibility of balancing expansive vector fields with contractive ones. Furthermore, to enlarge the area in the $(h_1,h_2)-$plane where non-expansiveness can be obtained, one can decide to divide into sub-intervals the time intervals $[0,h_1]$ and $[0,h_2]$. Doing smaller steps, the allowed area increases. Indeed, instead of doing two single time steps of length $h_1$ and $h_2$, one can perform $S$ time-steps all of step-length $h_1/S$ or $h_2/S$. Thus, replacing $\bar{h}_1 = h_1/S$ and $\bar{h}_2=h_2/S$ into~\cref{eq:contractiveRegion} it is immediate to see that $h_1$ and $h_2$ are allowed to be larger than with the case $S=1$. For example, if we again fix $a=0.5$ and set $S=2$, we get the area represented in~\cref{fig:regionS2}.
The choice of $a=0.5$ and $S=2$ is the one we adopt in the experiments reported in this section. We now conclude the section showing how the derived architecture allows to improve the robustness against adversarial attacks for the problem of image classification.
\subsection{Numerical experiments with adversarial robustness}
We now apply the reasoning presented above to the problem of classifying images of the CIFAR-10 and CIFAR-100 datasets. The implementation is done with PyTorch and is available at the GitHub repository associated to the paper\footnote{\url{https://github.com/davidemurari/StructuredNeuralNetworks}}. We work with convolutional neural networks, \bl{and with the activation function $\sigma(x) = \max\left\{x,\frac{x}{2}\right\}$, if not otherwise specified. We test multiple architectures and start by introducing the one coming  directly from the derivation reported in the previous section. The residual layers of this network are dynamical blocks based on the discrete flow maps
\begin{align}\label{eq:resLayers}
    \tilde{\Psi}^{h_1}(x) &= x - h_1 A_c^T\Sigma(A_cx+b_c) =: x-h_1 X(A_c,b_c,x),\,\,A_c^TA_c=I \nonumber \\
    \Psi^{h_2}(x) &= x + h_2 A_e^T\Sigma(A_e x+b_e) =: x+ h_2 X(A_e,b_e,x),\,\,A_e^TA_e=I \nonumber \\
    x & \mapsto \Psi^{h_2/2} \circ \tilde{\Psi}^{h_1/2}\circ \Psi^{h_2/2} \circ \tilde{\Psi}^{h_1/2} (x).
\end{align}
}
\red{The orthogonality of the convolutional filters $A_c$ and $A_e$ is imposed through a regularisation strategy proposed in \cite{wang2020orthogonal}. We comment more on this and alternative strategies later on in the description of the experimental setup.}
%We prefer this alternation regime rather than the one introduced in~\cref{eq:gradientAlternationRegime} because experimentally it gives better performances and it allows to generate networks that are easier to train. We remark that all the calculations and step restrictions presented before, apply with no changes to the alternation regime in~\cref{eq:resLayers}\footnote{This highlights the flexibility of the ODE based approach to impose structure on the network. Indeed, this strategy just a guided procedure to impose structure, but usually it does not limit considerably the options one has available.}. Indeed, what should change is just the estimate in equation~\cref{eq:expLipschitz}, but the Lipschitz constant of $\Sigma(Ax+b)$ can be bounded by $1$ as in that equation as long as $\|A\|\leq 1$. \bl{We remark that here $A$ and $b$ have no subscript because they are not associated to a system that is either contractive or expansive by construction.} 
The step restriction is imposed after every training iteration, projecting back the pairs $(h_1,h_2)$ in the region represented in~\cref{fig:regionS2} if needed.

\bl{The strategy in equation \eqref{eq:resLayers} is defined as a ``prescribed switching strategy" in the numerical experiments. It is applied both for the experiment on CIFAR-10 and CIFAR-100. To demonstrate the freedom one still has while using an alternation strategy to design the layers, we mention another switching strategy that we shall call ``flexible". In this case, we have the following alternation
\begin{align}\label{eq:flexible}
\tilde{\Psi}^{h_1}(x) &= x - h_1 A_c^T\Sigma(A_cx+b_c) =: x-h_1 X(A_c,b_c,x),\,\,A_c^TA_c=I \nonumber \\
\Psi^{h_2}(x) &= x + h_2A^T\mathrm{ReLU}(Ax+b) =: x+ h_2 X(A,b,x),\,\,A^TA=I \nonumber \\
x & \mapsto \tilde{\Psi}^{h_1/2}\circ \tilde{\Psi}^{h_1/2}\circ \Psi^{h_2}(x).
\end{align}
Here the weight $A$ is no longer with a subscript since the layer it defines has no guaranteed behaviour. The restriction on the stepsize $h_1$ is derived as in the previous section, while $h_2$ is either positive and satisfies a similar balance law as for the switching in equation \eqref{eq:resLayers}, or it is allowed to be negative. For the dynamical block to be overall contractive, in case of a negative step $h_2$ we constrain it as in equation \eqref{eq:contractiveDyn}, i.e. we impose $|h_2|<2$. In this way, there is not necessarily an alternation of expansive and contractive layers, but the optimiser is free to learn the switching strategy while still guaranteeing the non-expansivity of the dynamical block. For the experiments on CIFAR-100, we do not impose $A$ to be orthogonal, but we normalise it since we have observed improved performance in this way.}

\bl{We compare these alternation strategies with three other networks. The first uses only non-expansive flow maps defined in \eqref{eq:contractiveDyn}, with $\mathrm{ReLU}$ as an activation function. In the experiments, we denote this network as ``non-expansive". We set the weight matrices to be orthogonal and constrain the learnable step sizes to be less than $2$. We then report the results obtained with a more naive way of constraining the Lipschitz constant of a ResNet layer. This approach relies on composing maps of the form
\[
x\mapsto \frac{1}{2}\left(x + A^T\mathrm{ReLU}(Bx+b)\right),\,\,A^TA=B^TB=I,
\]
as suggested in \cite[Appendix D.1]{li2019preventing}. We noticed experimentally that this constraining strategy does not generate very expressive networks, which motivates the research for better 1-Lipschitz ResNet architectures, as proposed in this manuscript. As a general reference, we also include experiments based on a standard ResNet that is not constrained in its weights and is composed of maps of the form
\[
x\mapsto x + A^T\mathrm{ReLU}(Bx+b).
\]
Before commenting on the results, we remark that the na\"ively constrained network and the reference ResNet have double the parameters of the others based on dynamical systems. The rationale for this choice is to compare the networks at the level of the number of computations done per layer instead of based on the parameter count. For this reason, all the networks have the same number of layers. Furthermore, to get sufficiently accurate predictions on clean images, we did not constrain the last linear layer in all the experiments with the na\"ively constrained network and all the CIFAR-100 experiments for the other networks. To jump-start the training of the networks on the CIFAR-100 dataset, we initialised all their layers, but the final projection layer, with the weights obtained on the CIFAR-10 dataset.}

We implement architectures that take as inputs tensors of order three and shape $3\times 32 \times 32$. The first dimensionality of the tensor increases to $32-64-128$ feature maps throughout the network via convolutional layers. For each fixed number of filters, we have four layers of the forms specified above. To be precise, convolutional layers replace the matrix-vector products. %As a comparison, we also consider a baseline ResNet with the same layer sequence and lifting pattern. The residual layers are of the form $x\mapsto x + A^T\Sigma(Bx+b)$. This ResNet has a number of parameters that is roughly the double of those on which we constrain the Lipschitz constant, i.e. based on~\cref{eq:resLayers}.\newline\newline 

The network architecture based on~\cref{eq:resLayers} gets close to 90\% test accuracy\red{, on the CIFAR-10 dataset,} when trained with cross-entropy loss and without weight constraints. However, as presented at the beginning of this section, one could consider its Lipschitz constant and its margin at any input point of interest to get robustness guarantees for a network. For this reason, we now focus on constraining the Lipschitz constant of the architecture, and we introduce the loss function we adopt to promote higher margins. As in~\cite{anil2019sorting}, we train the network architecture with the multi-class hinge loss function defined as 
\[
\mathcal{L} = \frac{1}{N}\sum_{i=1}^N \sum\limits_{\substack{j=1 \\ j\neq \ell(x_i)}}^{10}\max\left\{0,\text{margin} - \left(\mathcal{N}(x_i)^Te_{\ell(x_i)} - \mathcal{N}(x_i)^Te_j \right)\right\},
\]
where $\text{margin}$ is a parameter to tune. We train all the networks with this loss function, and with a stochastic gradient descent (SGD) optimiser. Having predictions with higher margins allows us to get more robust architectures if we fix the Lipschitz constant. Still, too high margins can lead to poor accuracy. In the experiments we test the three margin values $0.07$, $0.15$ and $0.3$. For the networks based on dynamical systems, we report the results obtained constraining all the dynamical blocks and the final projection layer. However, we do not constrain the lifting layers. In this way, we can still control the full network's Lipschitz constant, just considering the norms of those lifting layers. On the other hand, we leave some flexibility to the network, which can train better also when we increase the hinge-loss margin. We notice experimentally that the dynamical blocks usually get a small Lipschitz constant. Thus, even when we do not constrain all the layers, the network will still be 1-Lipschitz.\newline\newline 
To get orthogonal convolutional filters, we apply the regularisation strategy proposed in~\cite{wang2020orthogonal}. This strategy is not the only one possible. Still, for this experiment, we preferred it to more involved ones since the main focus has been on the architecture, and the obtained results are satisfactory. Various works, e.g.~\cite{leimkuhler21a,ozay2016optimization,francca2021optimization}, highlight how one can directly constrain the optimisation steps without having to project the weights on the right space or to add regularisation terms. We have not experimented with these kinds of strategies, and we leave them for future study. We also work with an orthogonal initialisation for the convolutional layers. The lifting layers of the networks based on dynamical systems are modelled as $x\mapsto \alpha Wx$ for a convolutional filter $W$ with $\|W\|_2\leq 1$. To constrain the  norm to 1, we add a projection step after the stochastic gradient descent (SGD) method updates the weights, i.e.\ we normalise the weights as $W\mapsto W/\max\{1,\|W\|_2\}$. Here, the $2-$norm of the convolutional filters is computed with the power method as described, for example, in~\cite{meunier2021scalable}. Furthermore, we work with SGD having a learning rate scheduler that divides the learning rate after a fixed number of epochs. Finally, we generate the adversarial examples with the library ``Foolbox" introduced in~\cite{rauber2017foolbox}. We focus on the $\ell^2-$PGD attack and perform ten steps of it. We test different magnitudes of the adversarial perturbations.\newline\newline 
%In~\cref{fig:Constrained} we see that the robustness of the constrained neural networks improves compared to the baseline ResNet, whatever the margin we choose. Furthermore, the flexibility in the Lipschitz constant induced by the unconstrained lifting layers makes the training procedure less sensitive to the changes in the $\mathrm{margin}$ parameter. This experimental evidence supports the trade-off between these two scalar values reported in equation~\cref{eq:marginIneq}.
To analyse the results of the experiments, we show how the accuracy of the networks changes as we increase the magnitude of the perturbations and the areas under these curves we get. The Area Under the Curve (AUC) metric is an informative quantity adopted to measure the adversarial robustness \cite{bashivan2021adversarial}. \red{This metric is evaluated by computing the area below the piecewise linear curve obtained by plotting the robust accuracies as in~\cref{fig:plotsRob}. A higher value indicates a better tradeoff between accuracy and robustness.} In~\cref{fig:plotsRob}, we see that the robustness of the constrained neural networks based on dynamical systems improves compared to the baseline ResNet and the na\"ively constrained one. Furthermore, we see that alternating expansive layers in the network improves the tradeoff between clean accuracy and robustness than using a network with only non-expansive layers. To conclude, it is also evident from the experiments that if a more flexible alternating strategy is adopted, the results can improve because while the clean accuracy can increase, the robustness is kept unchanged or improved. \red{In~\cref{fig:plotsAlternation}, we plot the timesteps learned for the networks with a flexible alternation strategy. More precisely, given the 2 consecutive layers defined by
\begin{align*}
\tilde{\Psi}^{h_1}(x) &= x - h_1 A_c^T\Sigma(A_cx+b_c) =: x-h_1 X(A_c,b_c,x),\,\,A_c^TA_c=I \\
\Psi^{h_2}(x) &= x + h_2A^T\mathrm{ReLU}(Ax+b) =: x+ h_2 X(A,b,x),\,\,A^TA=I \nonumber \\
x & \mapsto \tilde{\Psi}^{h_1/2}\circ \tilde{\Psi}^{h_1/2}\circ \Psi^{h_2}(x),
\end{align*}
we plot line segments that are as long as the steps $h_1$ and $h_2$, doing it for all the pairs of such layers. The step $h_2$ can also be negative, leading to non-expansive dynamics. This possibility is the main difference provided by the flexibility of the alternation approach. We notice that, especially for the CIFAR-10 dataset, a timestep is learned to be negative. This is not the case for CIFAR-100. For the case of $\mathrm{margin}=0.15$ and $\mathrm{margin}=0.3$, reported in~\cref{se:expRob}, more steps are negative, especially for the CIFAR-10 experiments. On the other hand, there seems not to be a clear pattern in the step selection. These results suggest the optimiser exploits the freedom introduced due to the flexibility in the step selection and allows getting improved results in some instances. \Cref{se:expRob} collects more details on how the timesteps are constrained.} Furthermore, in~\cref{se:expRob}, we also report the experiments for different margin values.

\red{We remark that the results obtained with our proposed approach are not as good as those provided by the technique of adversarial training yet. On the other hand, our derivations lead to a more efficient training strategy that allows us to get networks with reduced sensitivity without the need to build adversarial examples in the training phase. Additionally, the results in~\cref{fig:plotsRob} show that the proposed constraining strategy allows considerable gains in the accuracy-robustness tradeoff. The proposed framework is general enough to allow for possible improvements and reduce the performance difference with adversarial training. However, how to do so in practice still needs to be understood. We mention some possibilities in \cref{se:conclusion}.}
\begin{figure}[ht!]
\centering
    \centering
    \begin{subfigure}{.49\textwidth}
        \includegraphics[width=\textwidth]{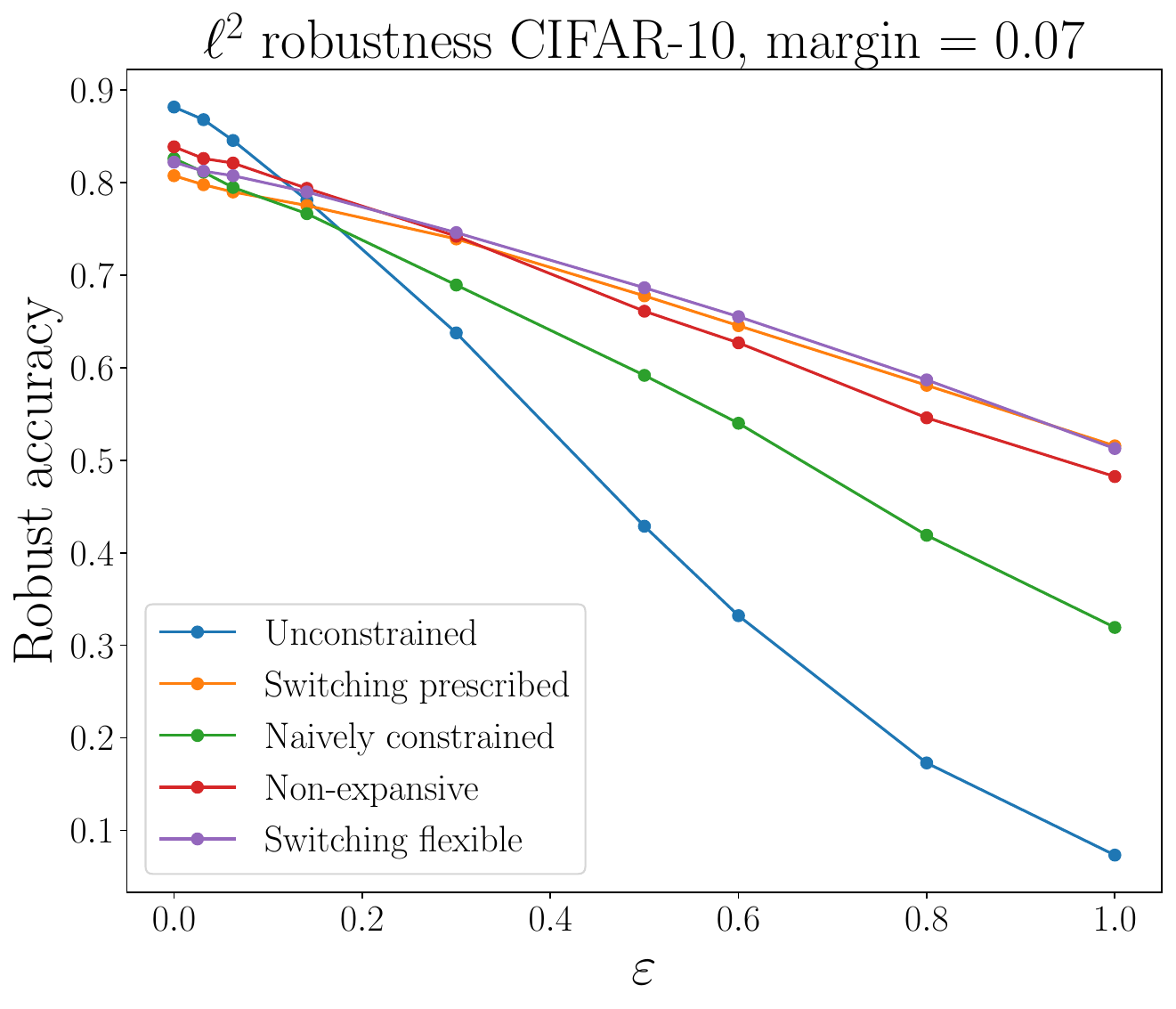}
    \caption{Robustness plot}
    \label{fig:robustness10}
    \end{subfigure}
    \centering
    \begin{subfigure}{.49\textwidth}
        \includegraphics[width=\textwidth]{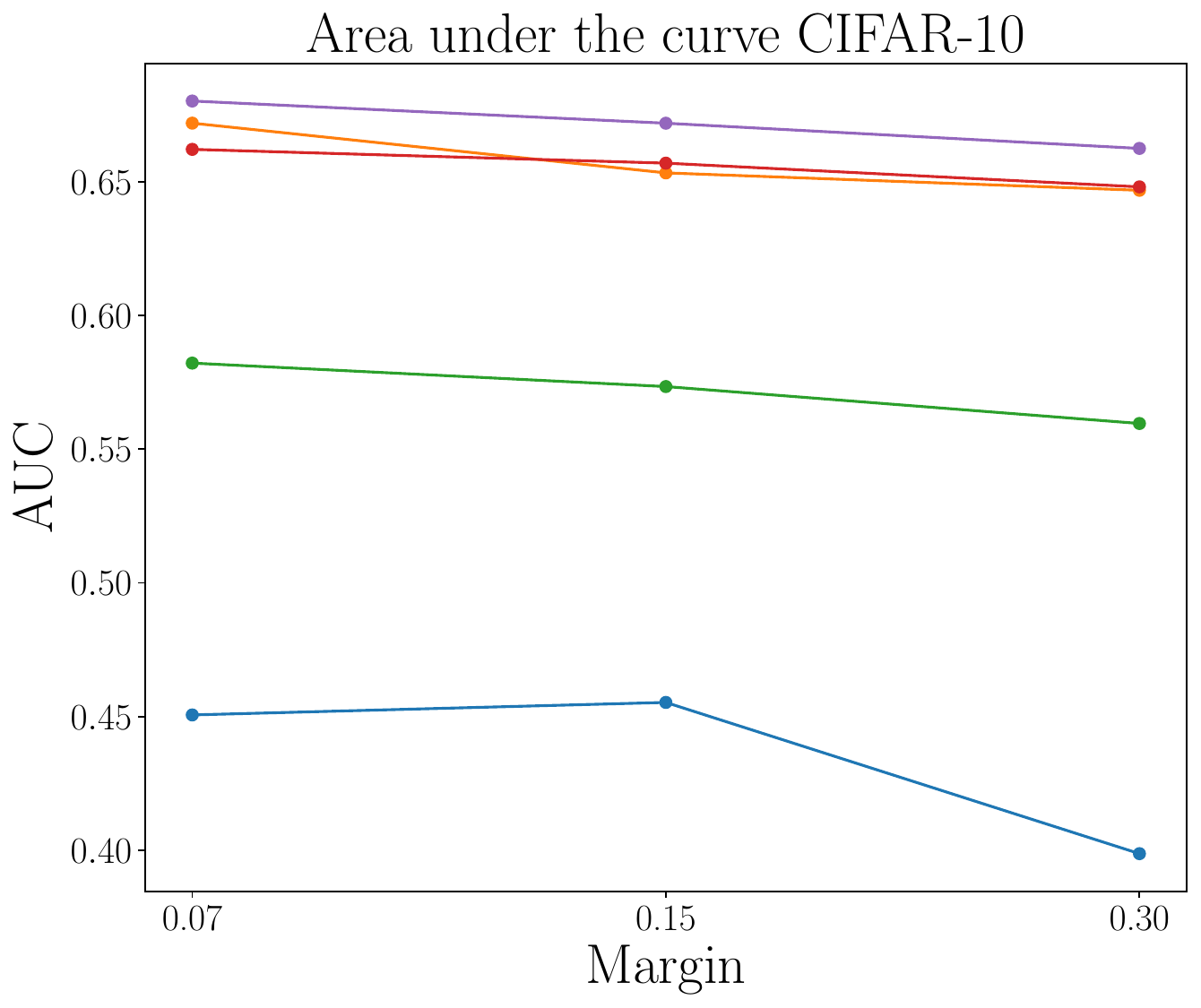}
    \caption{Area under the curve}
    \label{fig:auc10}
    \end{subfigure}

    \centering
    \begin{subfigure}{.49\textwidth}
        \includegraphics[width=\textwidth]{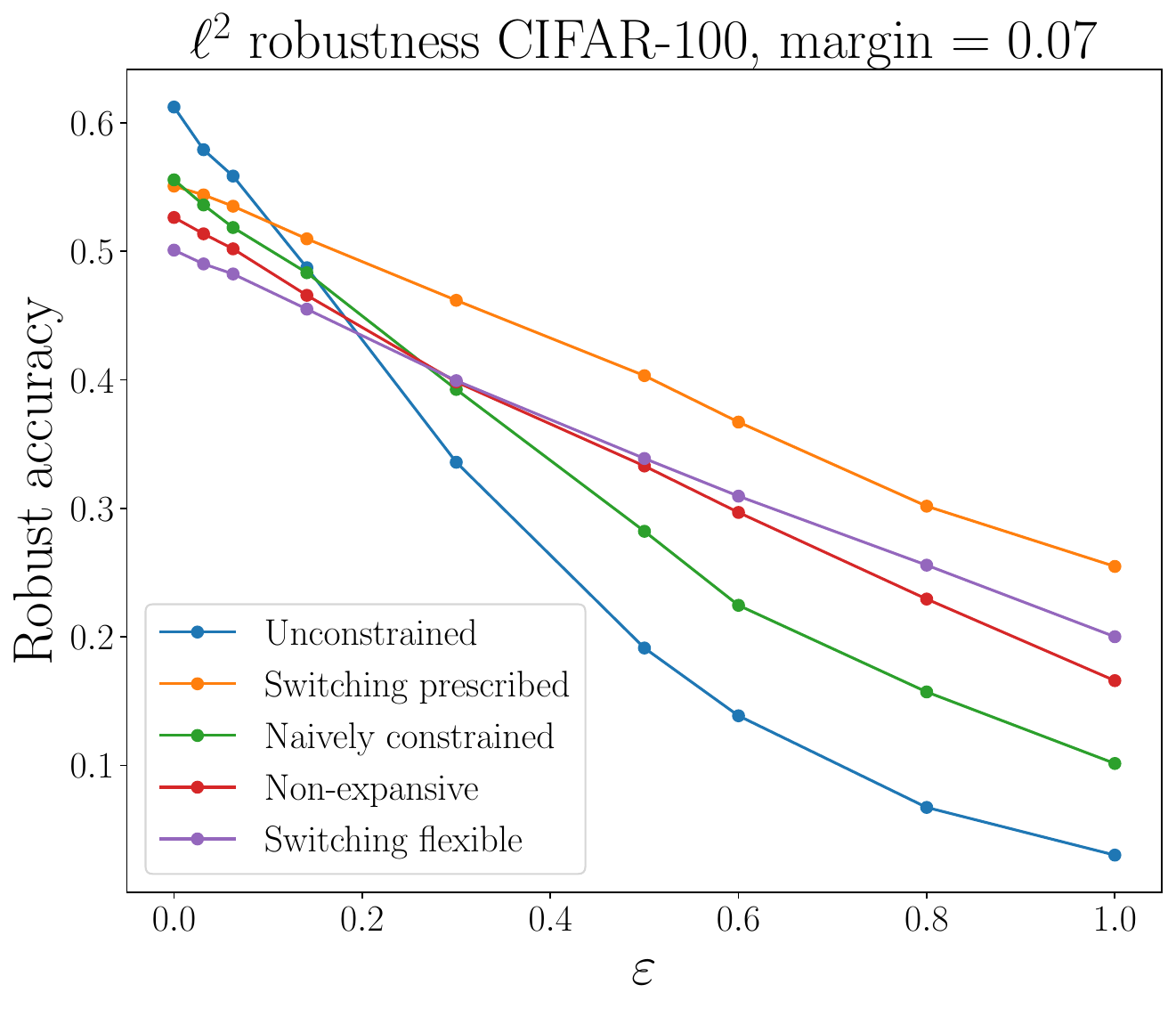}
    \caption{Robustness plot}
    \label{fig:robustness100}
    \end{subfigure}
    \centering
    \begin{subfigure}{.49\textwidth}
        \includegraphics[width=\textwidth]{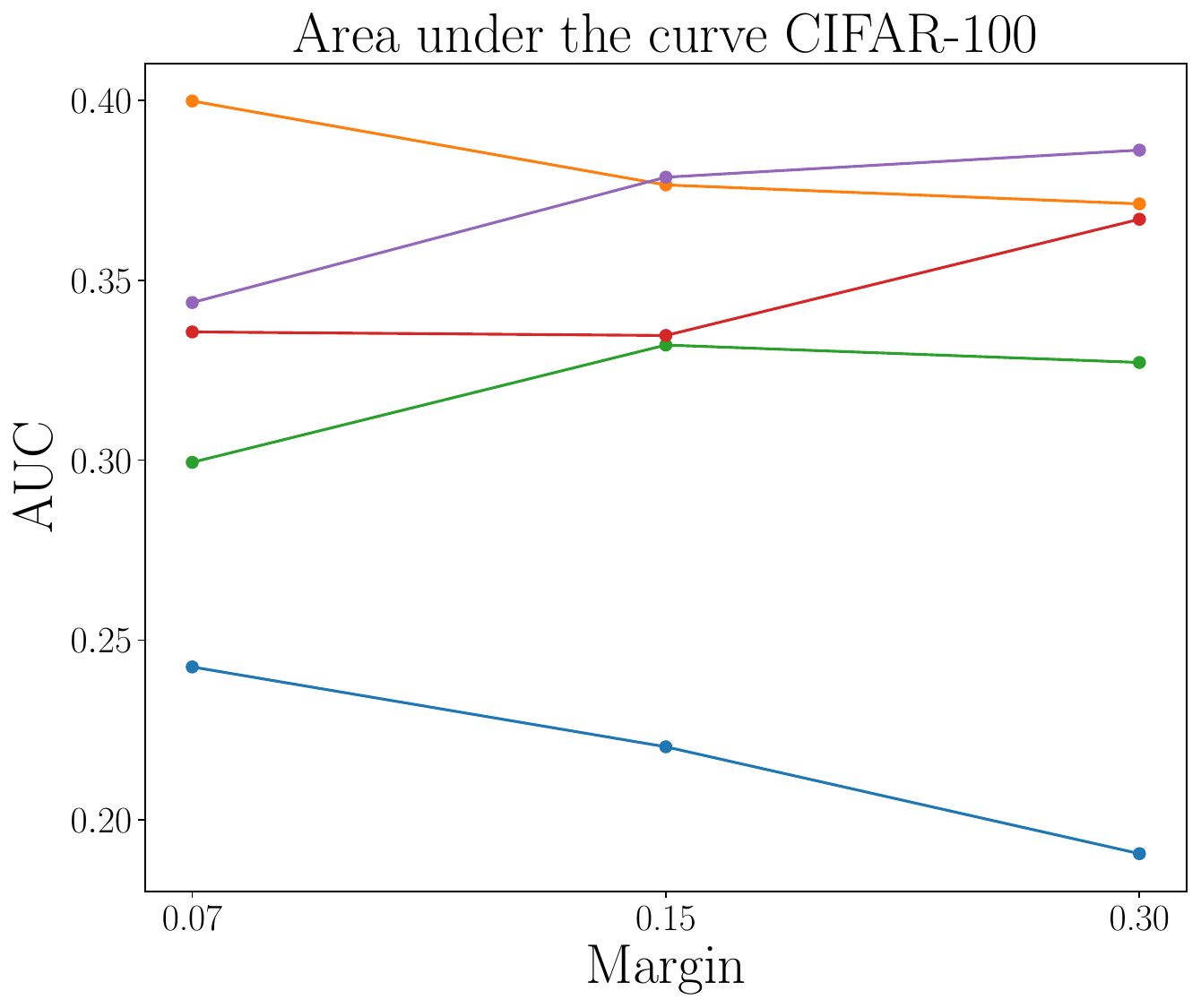}
    \caption{Area under the curve}
    \label{fig:auc100}
    \end{subfigure}
    \caption{On the left: plots of the accuracy against the magnitude of the adversarial PGD perturbation of $1024$ test clean images, comparing various perturbation magnitudes and the 5 networks introduced in the text. On the right: area under the curves, to measure the actual robustness of the models. The legend is shared among the four plots, and for clarity we omit in the plots for the area under the curve.}
    \label{fig:plotsRob}
\end{figure}

\begin{figure}
\centering
    \centering
    \begin{subfigure}{.49\textwidth}
    \includegraphics[width=\textwidth]{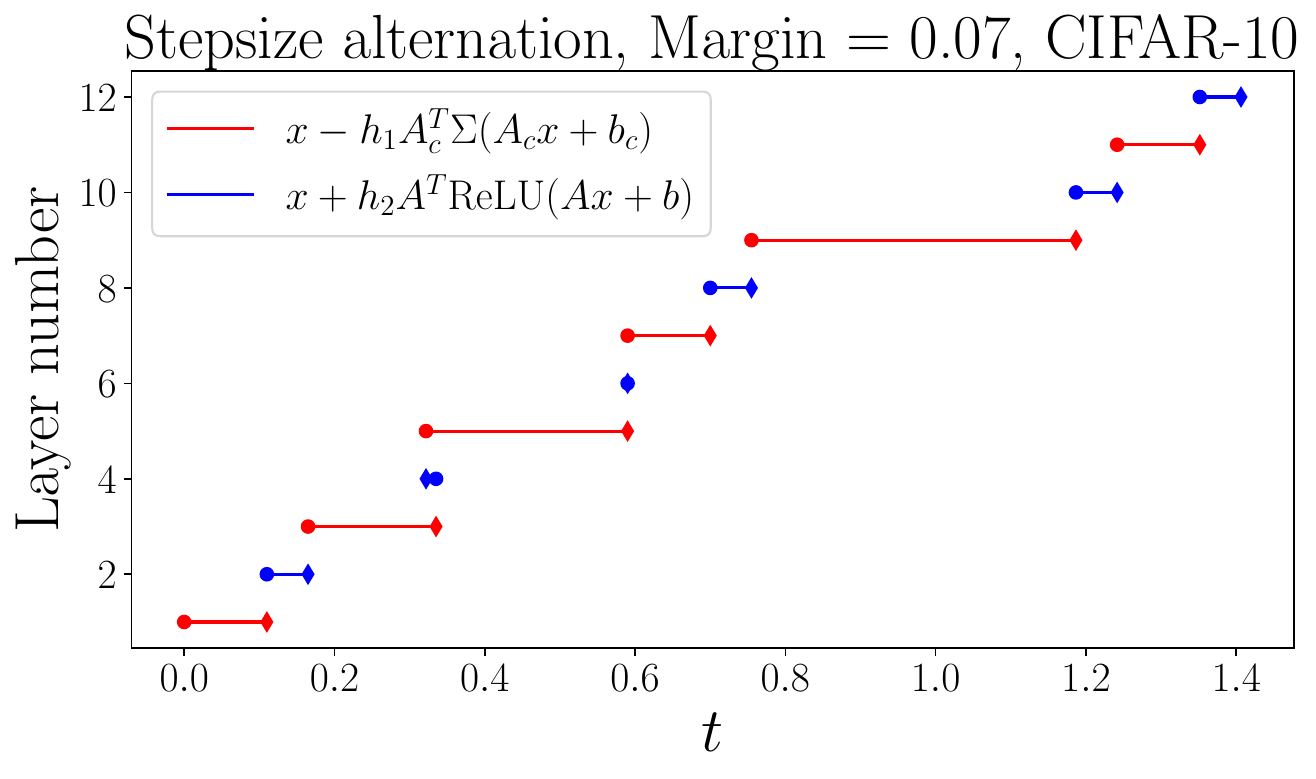}
    \caption{Learned step alternation strategy for CIFAR-10 dataset.}
    \label{fig:alt10}
    \end{subfigure}
    \begin{subfigure}{.49\textwidth}
        \includegraphics[width=\textwidth]{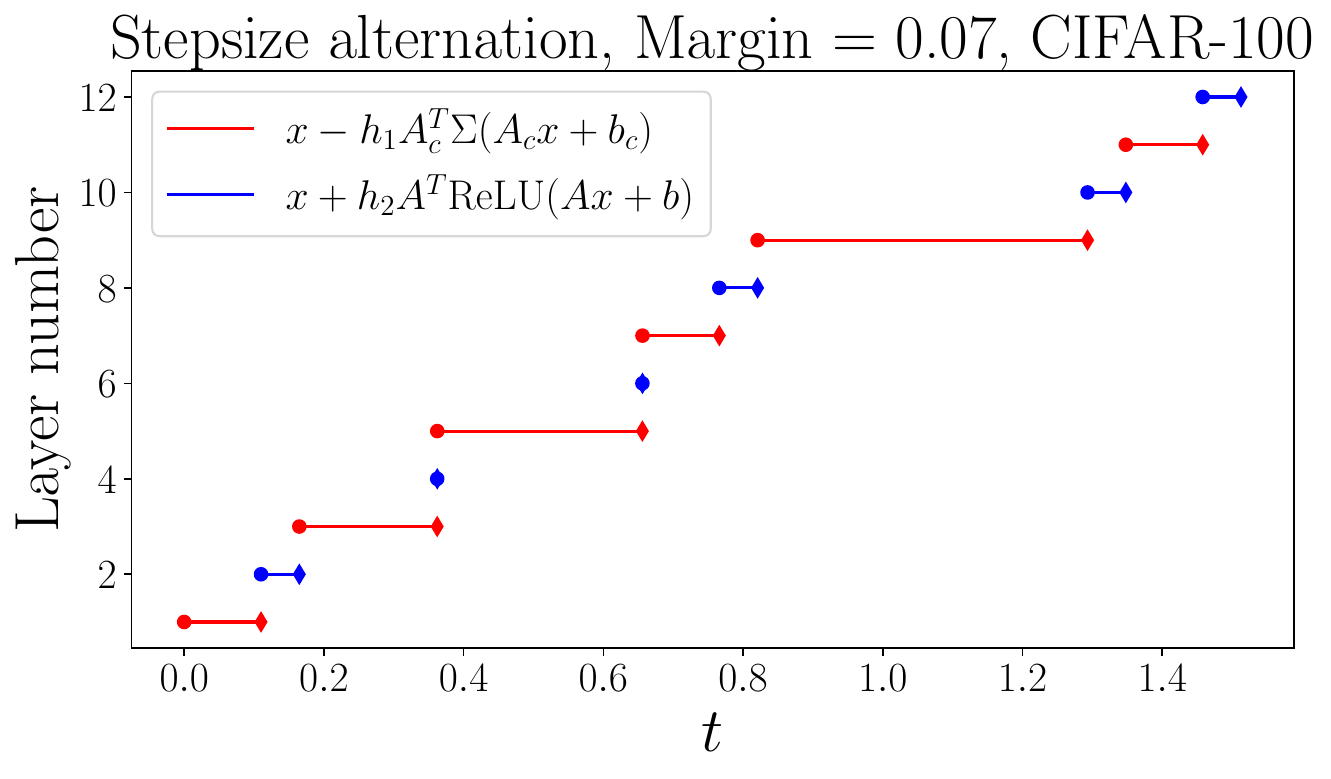}
    \caption{Learned step alternation strategy for CIFAR-100 dataset.}
    \label{fig:alt100}
    \end{subfigure}
    \caption{\red{Representation of the learned stepsizes for the 12 layers characterising the networks giving the results reported in~\cref{fig:plotsRob}. This is the case with $\mathrm{margin}=0.07$, and the other ones are reported in~\cref{se:expRob}. The time instants corresponding to the beginning of the interval where a layer is active are denoted with dots, while the ending time instants with diamonds. This implies that if a diamond on a segment is on the left of a dot, the represented timestep is negative. The abscissa $t$ corresponds to the sum of the timesteps characterising each layer.}}
    \label{fig:plotsAlternation}
\end{figure}

\section{Imposition of other structure}\label{se:structure}
Depending on the problem and the application where a neural network is adopted, the properties that the architecture should satisfy may be very different. We have seen in the previous section a strategy to impose Lipschitz constraints on the network to get some guarantees in terms of adversarial robustness. In that context, the property is of interest because it is possible to see that even when imposing it, we can get sufficiently accurate predictions. Moreover, this strategy allows controlling the network's sensitivity to input perturbations. As mentioned in the introduction, there are at least two other situations where structural constraints might be desirable. The first one is when one knows that the function to approximate has some particular property. The second is when we know that the data we process induces some structure on the function we want to approximate. \newline\newline 
This section supports the claim that combining ODE models with suitable numerical integrators allows us to define structured networks. More precisely, we derive multiple architectures by putting together these two elements. Some of these have already been presented in other works, and others are new. The properties that we investigate are symplecticity, volume preservation and mass preservation. For the first two, we describe how to constrain the dynamical blocks. For the third, we propose how to structure also the linear lifting and projection layers. Moreover, for this latter example, we also report some numerical experiments. \bl{The purpose of the presented toy example is to show that the architecture is computationally realisable and also effective.}
\subsection{Symplectic dynamical blocks}
A function $F:\mathbb{R}^{2n}\rightarrow\mathbb{R}^{2n}$ is said to be symplectic if it satisfies the identity
\[
\frac{\partial F(x)}{\partial x}^T\mathbb{J}\frac{\partial F(x)}{\partial x} = \mathbb{J}\quad \forall x\in\mathbb{R}^{2n},\,\,\mathbb{J}=\begin{bmatrix} 0_n & I_n \\ -I_n & 0_n \end{bmatrix}\in\mathbb{R}^{2n\times 2n}
\] 
with $0_n,I_n\in\mathbb{R}^{n\times n}$ being the zero and the identity matrices. Symplectic maps are particularly important in classical mechanics, because the flow map $\Phi^t$ of a Hamiltonian system $\dot{x}(t) = \mathbb{J}\nabla H(x(t))$ is symplectic, see e.g.~\cite{leimkuhler2004simulating}. This fact implies that if one is interested in approximating such a flow map with a neural network, then structuring it to be symplectic might be desirable. In this direction there are a considerable number of works (see e.g.~\cite{jin2020sympnets,chen2019symplectic,zhu2020deep,burby2020fast}). We mention in particular~\cite{jin2020sympnets} where the authors construct layers of a network to ensure the symplectic property is satisfied. On the other hand, in~\cite{chen2019symplectic} the authors consider a neural network as the Hamiltonian function of a system and approximate its flow map with a symplectic numerical integrator\footnote{We remark that a one-step numerical method $\Psi^h$ is symplectic if and only if, when applied to any Hamiltonian system, it is a symplectic map.}. The simplest symplectic integrator is symplectic Euler, which applied to $H(q,p)=V(q)+K(p)$ computes updates as
\[
q_{n+1} = q_n + h \partial_p K(p_n),\quad p_{n+1} = p_n - h \partial_q V(q_{n+1}).
\]
We now focus on the gradient modules presented in~\cite{jin2020sympnets}, defined by alternating maps of the form
\[
\mathcal{G}_{1}(q,p) = \begin{bmatrix}A^T\diag(\alpha)\Sigma(Ap+a) + q \\ p\end{bmatrix} \quad \mathcal{G}_{2}(q,p) = \begin{bmatrix} q \\ B^T\diag(\beta)\Sigma(Bq+b) + p \end{bmatrix}. 
\]
We notice that we can obtain the same map from a time-switching ODE model. We first introduce the time-dependent Hamiltonian of such a model, which is
\begin{equation}\label{eq:switchHam}
H_{s(t)}(z) = \alpha_{s(t)}^T\Gamma(A_{s(t)}P_{s(t)}z + b_{s(t)}),\quad A_{s(t)}\in\mathbb{R}^{n\times n},\,b_{s(t)}\in\mathbb{R}^{n},
\end{equation}
with $s:[0,+\infty)\rightarrow\mathbb{R}^+$ being piecewise constant. We can suppose without loss of generality that $s(t)\in \{0,1,2,\ldots,K\}$ and that $P_{s(t)}$ alternates between $\Pi_1 = [I_n,0_n]\in\mathbb{R}^{n\times 2n}$ and $\Pi_2=[0_n,I_n]\in\mathbb{R}^{n\times 2n}$. Let now $\Gamma(z) = [\gamma(z_1),\ldots,\gamma(z_n)]$, $\Sigma(z) = [\sigma(z_1),\ldots,\sigma(z_n)]$, and $\gamma'(s)=\sigma(s)$. We then notice that the Hamiltonian vector field associated to $H_{s(t)}$ alternates between the following two vector fields
\[
X_{H_1} = \begin{bmatrix}A_1^T\diag(\alpha_1)\Sigma(A_1p+b_1) & 0\end{bmatrix}^T,\quad X_{H_2} = \begin{bmatrix}0 & -A_2^T\diag(\alpha_2)\Sigma(A_2q+b_2)\end{bmatrix}^T.
\]
We now conclude that if we compute the exact flow of $X_{H_{s(t)}}$ and we take $s(t)$ to be constant on every interval of length 1, we recover the gradient module in~\cite{jin2020sympnets}. \newline\newline 
Similarly, all the network architectures presented in~\cite{chen2019symplectic} and related works are based on defining a neural network $\mathcal{N}(q,p)$ that plays the role of the Hamiltonian function and then applying a symplectic integrator to the Hamiltonian system $\dot{z} = \mathbb{J}\nabla \mathcal{N}(z)$.
The composition of discrete flow maps of the a time independent Hamiltonian gives a symplectic network with shared weights. On the other hand, if the Hamiltonian changes as in~\cref{eq:switchHam}, one gets different weights for different layers and potentially a more expressive model as presented in~\cite{jin2020sympnets}.

\subsection{Volume-preserving dynamical blocks}
Suppose that one is interested in defining efficiently invertible and volume-preserving networks. In that case, an approach based on switching systems and splitting methods can provide a flexible solution. Consider the switching system defined by $\dot{z}(t) = f_{s(t)}(z(t))$, $f_i\in\mathfrak{X}(\mathbb{R}^n)$,
where for every value of $s(t)$, the vector field has a specific partitioning that makes it divergence-free. For example, if we have $n=2m$,
\[
f_{s(t)}(z) = \begin{bmatrix} u_{s(t)}(z[m:]) & v_{s(t)}(z[:m]) \end{bmatrix}^T
\]
satisfies such a condition and its flow map will be volume-preserving. We can numerically integrate such a vector field while preserving this property. Indeed, we can apply a splitting method based on composing the exact flow maps of the two volume-preserving vector fields
\[
f_{s(t)}^1(z)=\begin{bmatrix} u_{s(t)}(z[m:]) & 0 \end{bmatrix}^T,\quad
f_{s(t)}^2(z)=\begin{bmatrix} 0 & v_{s(t)}(z[:m])\end{bmatrix}^T.
\]
This approach gives architectures that are close to the ones of RevNets (see  e.g.~\cite{gomez2017reversible}). The inverse of the network is efficient to compute in this case, and this translates into memory efficiency since one does not need to save intermediate activation values for the backpropagation. A particular class of these blocks can be obtained with second-order vector fields and, in particular, with second-order conservative vector fields (hence Hamiltonian):
 $\ddot{x}(t) = f_{s(t)}(x(t)),\text{ or }\ddot{x}(t) = -\nabla V_{s(t)}(x(t))$, where, for example, $V_{s(t)}(x) = \boldsymbol{\alpha}^T\Gamma(A_{s(t)}x+b_{s(t)})$ for some $\Gamma = [\gamma, \ldots, \gamma]$. With a similar strategy, one can also derive the volume-preserving neural networks presented in~\cite{bajars2021locally}.

\subsection{Mass-preserving neural networks}
The final property we focus on is mass preservation. By mass preservation, we refer to the conservation of the sum of the components of a vector (see~\cite{blanes2021positivity}). \bl{This property is typical of semi-discretisations of mass-preserving PDEs, models for chemical reactions, for population dynamics and ecology (see e.g. \cite{refMass1,refMass2,refMass3}).} More explicitly, one could be interested in imposing such a structure if the goal is to approximate a function $F:\mathbb{R}^n\rightarrow\mathbb{R}^m$ that is known to satisfy $T_nx = \sum_{i=1}^{n}x_i=T_mF(x)=\sum_{i=1}^mF(x)_i$. A simple way to impose such property is by approximating the target function $F:\mathbb{R}^n\rightarrow\mathbb{R}^m$ as
\[
F(x)\approx \frac{\sum_{i=1}^n x_i}{\sum_{j=1}^m \tilde{F}(x)_j} \tilde{F}(x) 
\]
where $\tilde{F}:\mathbb{R}^n\rightarrow\mathbb{R}^m$ is any sufficiently expressive neural network. However, this choice might lead to hard training procedures because of the denominator. Imposing this structure at the level of network layers is not so intuitive in general. Hence, we rely again on a suitable ODE formulation. A vector field $X\in\mathfrak{X}(\mathbb{R}^n)$ whose flow map preserves the sum of the components of the state vector is simply one having a linear first integral $g(x) = \boldsymbol{1}^Tx = \sum_{i=1}^n x_i$. Thus, we can design vector fields of the form
\begin{equation}\label{eq:skew}
\dot{y}(t) = (A(y)-A(y)^T)\boldsymbol{1},\,\, A:\mathbb{R}^n\rightarrow \mathbb{R}^{n\times n},
\end{equation}
and this property will be then a natural consequence of the exact flow map. This mass conservation could also be extended to a weighted-mass conservation, and we would just have to replace $\boldsymbol{1}$ with a vector of weights $\boldsymbol{\alpha}$. This extension does not, however, allow to change the dimensionality from a layer to the next one as easily. To model these vector fields, we can work with parametric functions like $\tilde{f}(x) = B^T\Sigma(Ax+b)\in\mathbb{R}^{n(n-1)/2}$ and use them to build the upper triangular matrix-valued function $A$ in~\cref{eq:skew}. As presented in \cite[Chapter 4]{geomBook}, it is also immediate to impose this property at a discrete level since every Runge-Kutta or multistep method preserves linear first integrals without time-step restrictions. Thus, a possible strategy to model mass-preserving neural networks is based on combining layers of the following types:
\begin{enumerate}
\item Lifting layers: $L:\mathbb{R}^k\rightarrow\mathbb{R}^{k+s}$, $L(x_1,\ldots,x_k) = (x_1,\ldots,x_k,0,0,\ldots,0)$,
\item Projection layers: $P:\mathbb{R}^{k+s}\rightarrow\mathbb{R}^k$, $P(x_1,\ldots,x_k,x_{k+1},\ldots,x_{k+s}) = (x_1+o,\ldots,x_{k}+o)$, with $o=\sum_{i=1}^s x_{k+i}/s,$
\item Dynamical blocks: one-step explicit Euler discretisations of~\cref{eq:skew}.
\end{enumerate}
To test the neural network architecture, we focus on the approximation of the flow map of the SIR-model
\begin{equation}
\dot{y} = \begin{bmatrix} -y_1y_2 & y_1y_2-y_2 & y_2 \end{bmatrix}^T = X(y)^T.
\label{eq:SIR}
\end{equation}
\bl{This experiment relates to the research area of data-driven modelling, which has attracted a high amount of interest in recent years, especially through the tools provided by machine learning (see, e.g., \cite{celledoni2022learning,eidnes2022order,bajars2021locally,chen2019symplectic})}. We model the neural network as discussed above. We approximate the $1-$flow map of~\cref{eq:SIR} working with pairs of the form $\{(x_i,y_i=\Psi^1_X(x_i))\}_{i=1,\ldots,N}$\footnote{Here with $\Psi^1_X(x_i)$ we refer to an accurate approximation of the time-1 flow map of X applied to $x_i$}. \bl{In this context we suppose it is not possible to integrate in time the system of ODEs because this is not available, and what is provided is just a set of observed trajectories.}  The plots in~\cref{fig:massPres} represent the first two components of the solution for the SIR model. All the line segments connect the components of the initial conditions with those of the time-1 updates. The considerable benefit of mass-preservation as a constraint is that it allows interpretable outputs. Indeed, in this case the components of $y$ represent the percentages of three species on the total population, and the network we train still allows to get this interpretation being mass-preserving.
\begin{figure}[ht!]
\centering
\begin{subfigure}{0.49\textwidth}
\centering
\includegraphics[width=\textwidth]{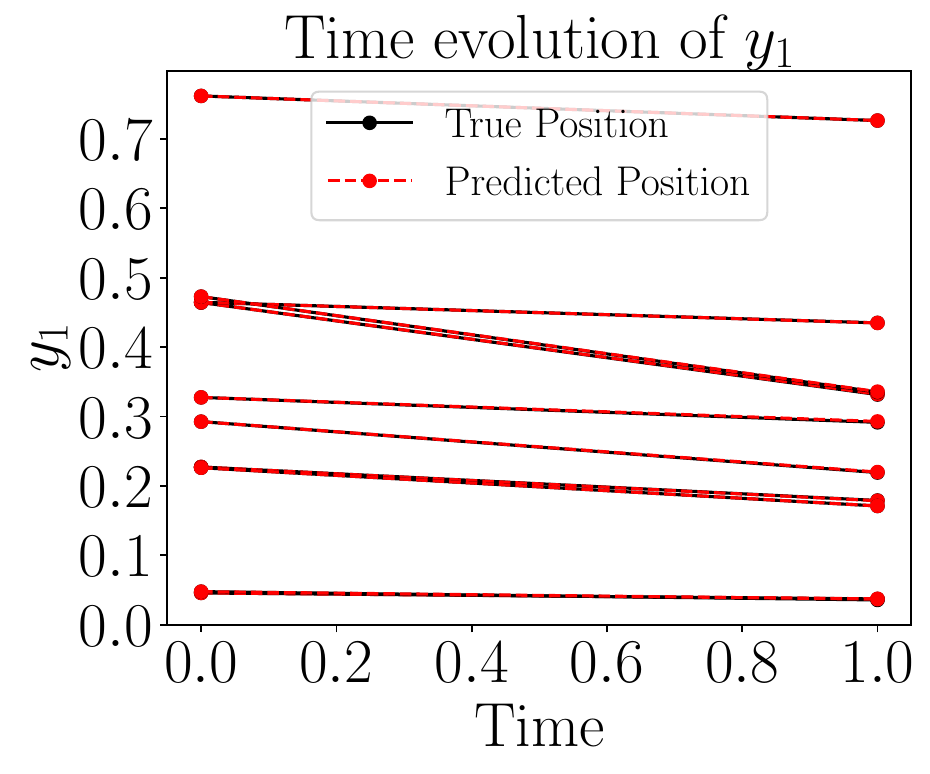}
\end{subfigure}
\begin{subfigure}{0.49\textwidth}
\centering
\includegraphics[width=\textwidth]{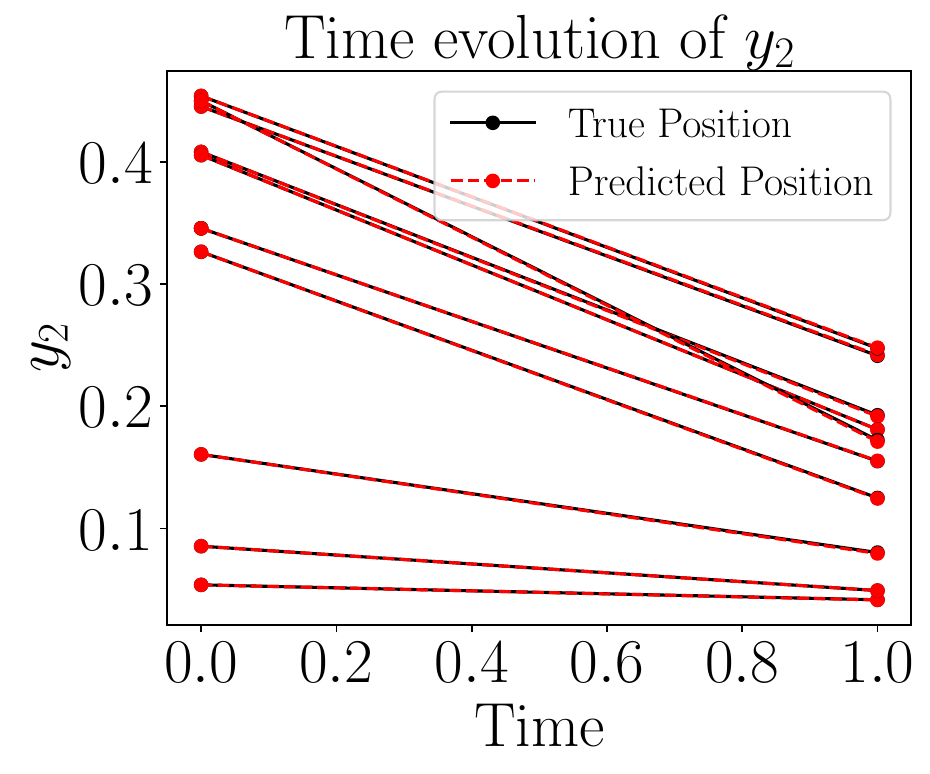}
\end{subfigure}
\caption{Plots of the approximation of the time $1-$flow map of the SIR model~\cref{eq:SIR} for 10 test initial conditions. We report the first two components of the solutions. Each point represents either an initial condition (at time 0), or a time-1 update.}
\label{fig:massPres}
\end{figure}

\section{Conclusion and future directions}\label{se:conclusion}
In this work, we have introduced a framework to combine the design of ODE models with the choice of proper numerical methods to, in turn, obtain neural networks with prescribed properties. After introducing and motivating the approach, we proved two universal approximation theorems. The first one relates to sphere-preserving and gradient vector fields, while the second one involves Hamiltonian vector fields. We then obtained Lipschitz-constrained ResNets, focusing mainly on how to introduce layers that are not 1-Lipschitz. We then applied this construction to get neural networks with adversarial robustness guarantees. Finally, to show the framework's flexibility, we demonstrated how to design dynamical blocks that are symplectic, volume-preserving and mass-preserving.

\red{The main application investigated in this manuscript is the one of adversarial robustness. Our experiments highlight that the robustness of neural networks to input perturbations can be improved using structured neural networks. However, the obtained results are competitive with other constraining strategies but not with adversarial training, which still provides state-of-the-art performance. We plan to optimise the proposed approach to get higher clean accuracy, possibly by designing a better optimisation strategy or designing other more expressive families of expansive and contractive vector fields}.

Throughout the manuscript we have focused on explicit numerical methods as tools to generate neural network architectures. However, many geometric integrators are implicit (see e.g.~\cite{geomBook}); thus, this remains a direction to pursue in further work so that the framework can be extended to other properties (see e.g.~\cite{bai2019deep,look2020differentiable}).

We have adopted the formalism of piecewise-autonomous dynamical systems to design neural networks without heavily relying on the theory of time-switching systems. However, switching systems are a well-studied research area (see e.g.~\cite{liberzon2003switching,liberzon1999basic,alpcan2010stability}), and it seems natural to study them further and their use to design neural network architectures. 

Finally, we remark that imposing properties on neural networks is a promising strategy to make them more understandable, interpretable, and reliable. On the other hand, it is also clear that constraining the architecture can considerably decrease the network's expressivity in some cases. Thus, it remains to understand when it is preferable to replace hard constraints with soft constraints promoting such properties without imposing them by construction. 

{\bf Acknowledgements} The authors would like to thank the Isaac Newton Institute for Mathematical Sciences, Cambridge, for support and hospitality during the programmes {\it Mathematics of deep learning\/} and {\it Geometry, compatibility and structure preservation in computational differential equations\/} where work on this paper was significantly advanced. This work was supported by EPSRC grant no EP/R014604/1. EC and BO have received funding from the European Union’s Horizon 2020 research and innovation programme under the Marie Sk{\l}odowska-Curie grant agreement No 860124.

\bibliographystyle{siamplain}
\bibliography{biblio.bib}

\appendix

\section{Some numerical experiments for data-driven modelling and regression}\label{se:expReg}

We now apply the theoretical background introduced in~\cref{se:approx} to two approximation tasks. More precisely, we verify whether the introduced architectures are complicated to train in practice or whether they can achieve good performances. The two tasks of interest are the approximation of a continuous scalar function and the approximation of a $\mathcal{C}^1$ vector field starting from a set of training trajectories. \newline\newline 
We start with the approximation of $f(x) = x^2 + |x| + \sin{(x)}$, $x\in\mathbb{R}$, and $g(x,y)=\sqrt{x^2+y^2}$. The goal here is to approximate them by composing flow maps of vector fields that are structured as 
\[
X_G(z) = A^T \diag(\boldsymbol{\alpha})\Sigma(Az+b) = \nabla \left(\boldsymbol{\alpha}^T\Gamma(Az+b)\right),
\]
\[
X_S(z) = (A(z)-A(z)^T)z.
\]
More precisely, following the result in~\cref{thm:switch}, we compose the flow maps of $X_G$ and $X_S$, maintaining the same time step for pairs of such flow maps. We report the results in~\cref{fig:regression}.

\begin{figure}[ht!]
\begin{subfigure}{.45\textwidth}
\centering
\includegraphics[width=\textwidth]{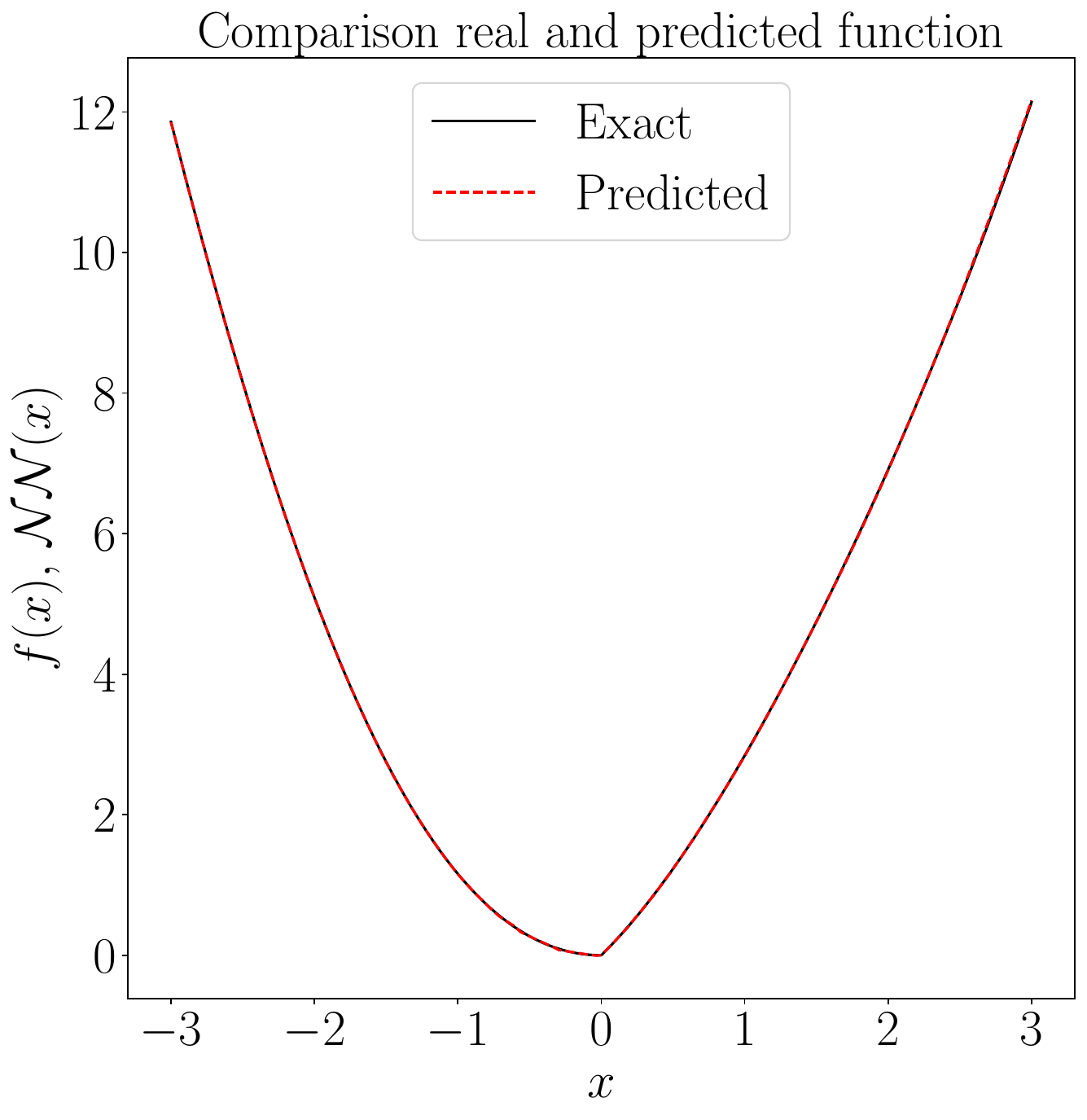}
\caption{Approximation of $f(x)$}
\end{subfigure}
\begin{subfigure}{.45\textwidth}
\centering
\includegraphics[,width=\textwidth]{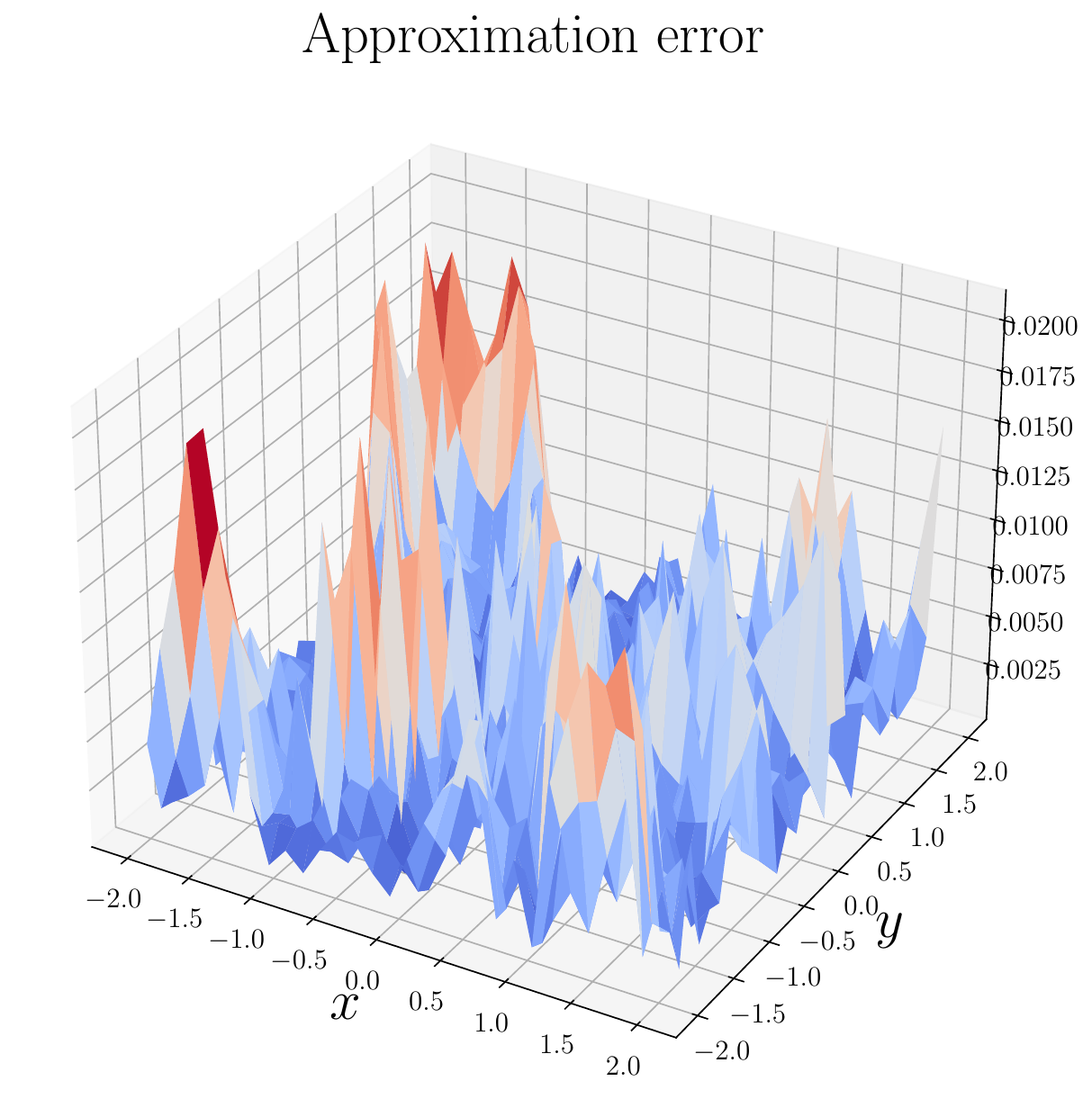}
\caption{Approximation of $g(x,y)$}
\end{subfigure}
\caption{Comparison between the trained networks and the true functions.}
\label{fig:regression}
\end{figure}
The second experiment that we report, is the one of approximating a vector field $X\in\mathfrak{X}(\mathbb{R}^n)$ starting from a set of training pairs $\{(x_i,y_i)\}_{i=1,\ldots,N}$, with $y_i = \Psi^h_X(x_i)$, for an accurate approximation $\Psi^h_X$ of the time-$h$ flow of $X$. We recall that the universal approximation result based on the Presnov decomposition allows approximating any vector field $X$ as 
\[
X_{\theta}(z)=A^T\diag(\alpha)\Sigma(Az+b)+(B(z)-B(z)^T)z= X_G(z) + X_S(z),
\]
as long as the weights are chosen correctly. In principle, calling $\Psi^h$ any numerical method applied to $X_{\theta}$ and said $\hat{y}_i = \Psi^h(x_i)$, one could train the weights of $X_{\theta}$ so that they minimise
\[
\mathcal{L} = \frac{1}{N}\sum_{i=1}^N \left\|y_i - \hat{y}_i\right\|^2.
\]
However, because of the properties of $X_G$ and $X_S$, we choose to preserve them with $\Psi^h$ and apply a splitting method. In other words, we apply to $X_S$ and $X_G$ two integrators $\Psi_S^h$ and $\Psi_G^h$, then compose them to obtain $\Psi^h=\Psi_G^h\circ \Psi_S^h$. We choose $\Psi_S^h$ to be an explicit method that preserves the conserved quantity $\|x\|^2$, while $\Psi_G^h$ to be a discrete gradient method (see  e.g.~\cite{eidnes2022order,mclachlan1999geometric}) so that it preserves the dissipative nature of $X_G$. As said before, this splitting strategy is not necessary in principle. However, we propose it as an alternative inspired by all the works on Hamiltonian neural networks (see e.g.~\cite{chen2019symplectic,celledoni2022learning,eidnes2022order}) where geometric integrators are often utilised. Furthermore, it would be interesting to understand if this or other splitting strategies give better approximation results or theoretical guarantees, but this goes beyond the scope of this work.\newline\newline 
We choose $\Psi_S^h$ as a modified Euler-Heun method, following the derivation presented in~\cite{calvo2006preservation}, so that it is explicit and it also preserves $\|x\|^2$. For $\Psi_G^h$ we use instead the Gonzalez discrete gradient method (see e.g.~\cite{mclachlan1999geometric}). 
\begin{figure}[ht!]
\begin{subfigure}{.49\textwidth}
\centering
\includegraphics[width=.8\textwidth]{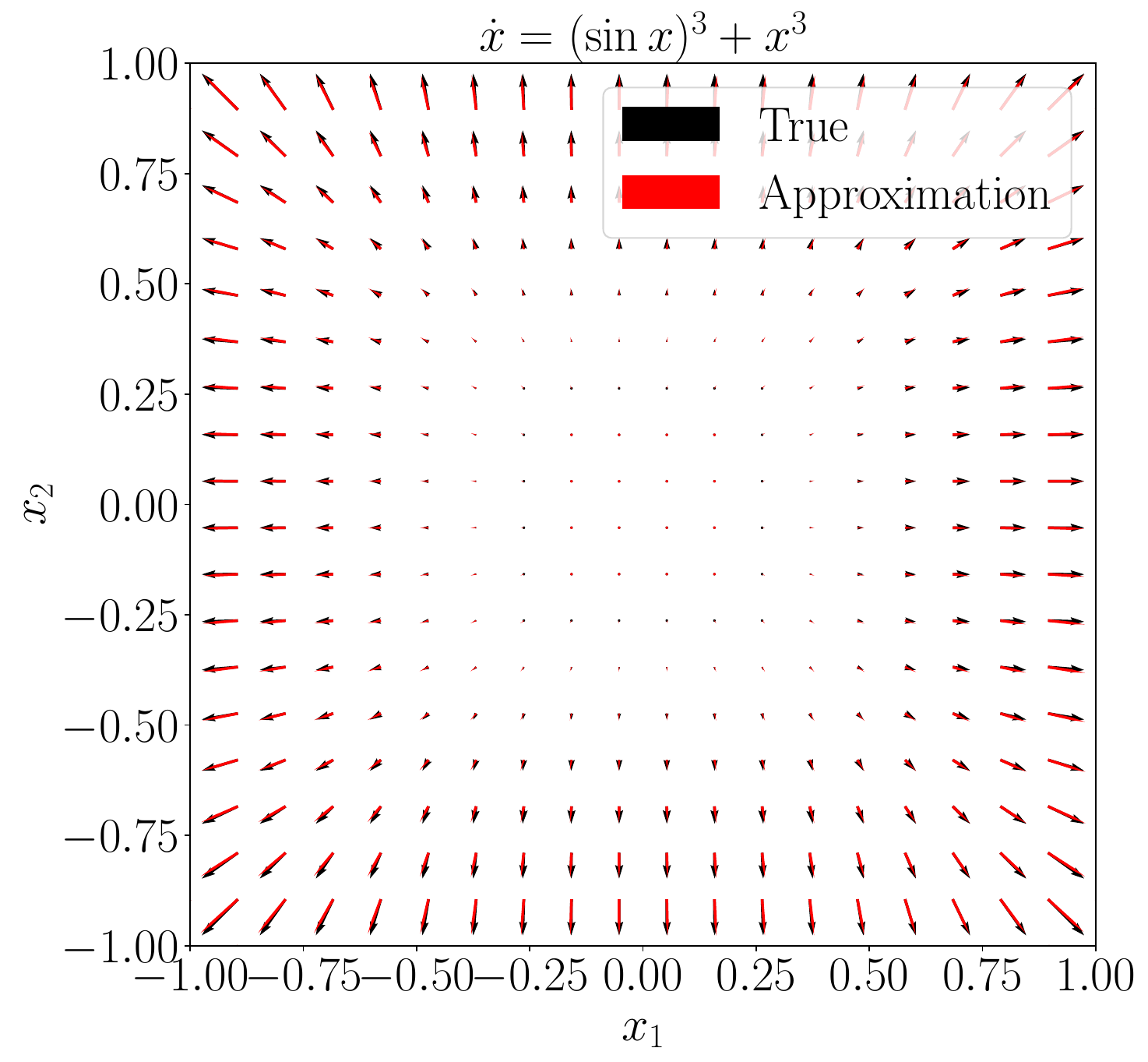}
\end{subfigure}
\begin{subfigure}{.49\textwidth}
\centering
\includegraphics[width=.8\textwidth]{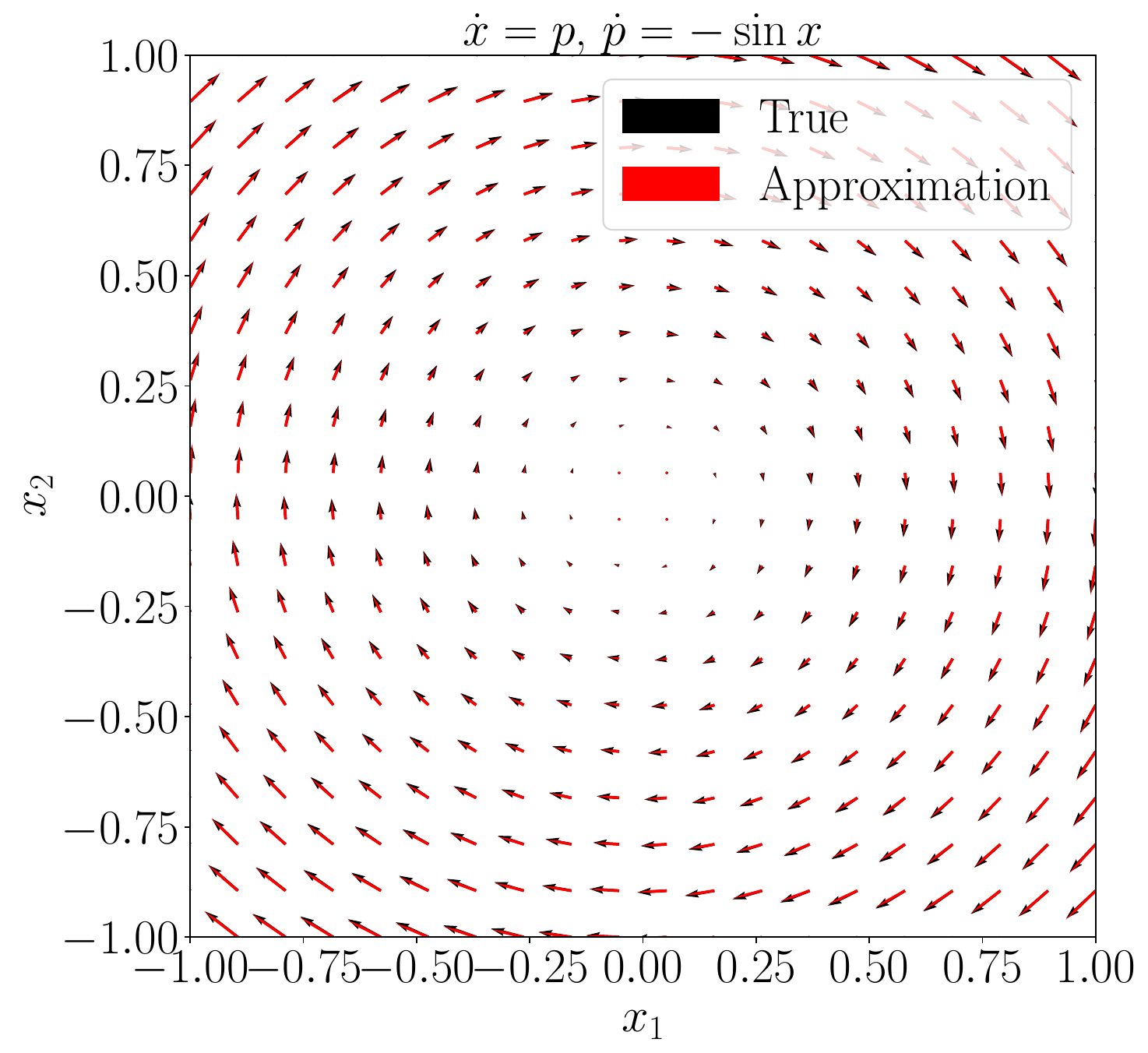}
\end{subfigure}
\caption{In the two plots, we compare the true and predicted vector fields.}
\label{fig:phasePort}
\end{figure}
We remark that discrete gradient methods applied to $\dot{x}(t) = -\nabla V(x(t))$ are of the form
\[
x^{n+1} = x^n - h\overline{\nabla}V(x^n,x^{n+1})
\]
and they are hence implicit. However, since we have the trajectories available, i.e.\ the $y_i$s are known, we do not have to solve a non-linear system of equations. Indeed, the problem of approximating $X$ amounts to minimise the following cost function
\[
\mathcal{L} = \frac{1}{N} \sum_{i=1}^N\left\|y_i - \left(\Psi_S^h(x_i) - h \overline{\nabla}V(\Psi_S^h(x_i),y_i)\right)\right\|^2.
\]
In~\cref{fig:phasePort}, we report the results obtained for the following two vector fields
\[
X_1(x) = \begin{bmatrix} (\sin{x_1})^3+x_1^3\\ (\sin{x_2})^3+x_2^3 \\ (\sin{x_3})^3+x_3^3 \\ (\sin{x_4})^3+x_4^3 \end{bmatrix} \quad X_2(x) = \begin{bmatrix} x_2 \\ -\sin{x_1} \end{bmatrix}.
\]
\section{Non-expansive networks with non-Euclidean metric}\label{se:generalmetric}

\begin{figure}[ht!]
    \centering
    \includegraphics[width=.7\textwidth]{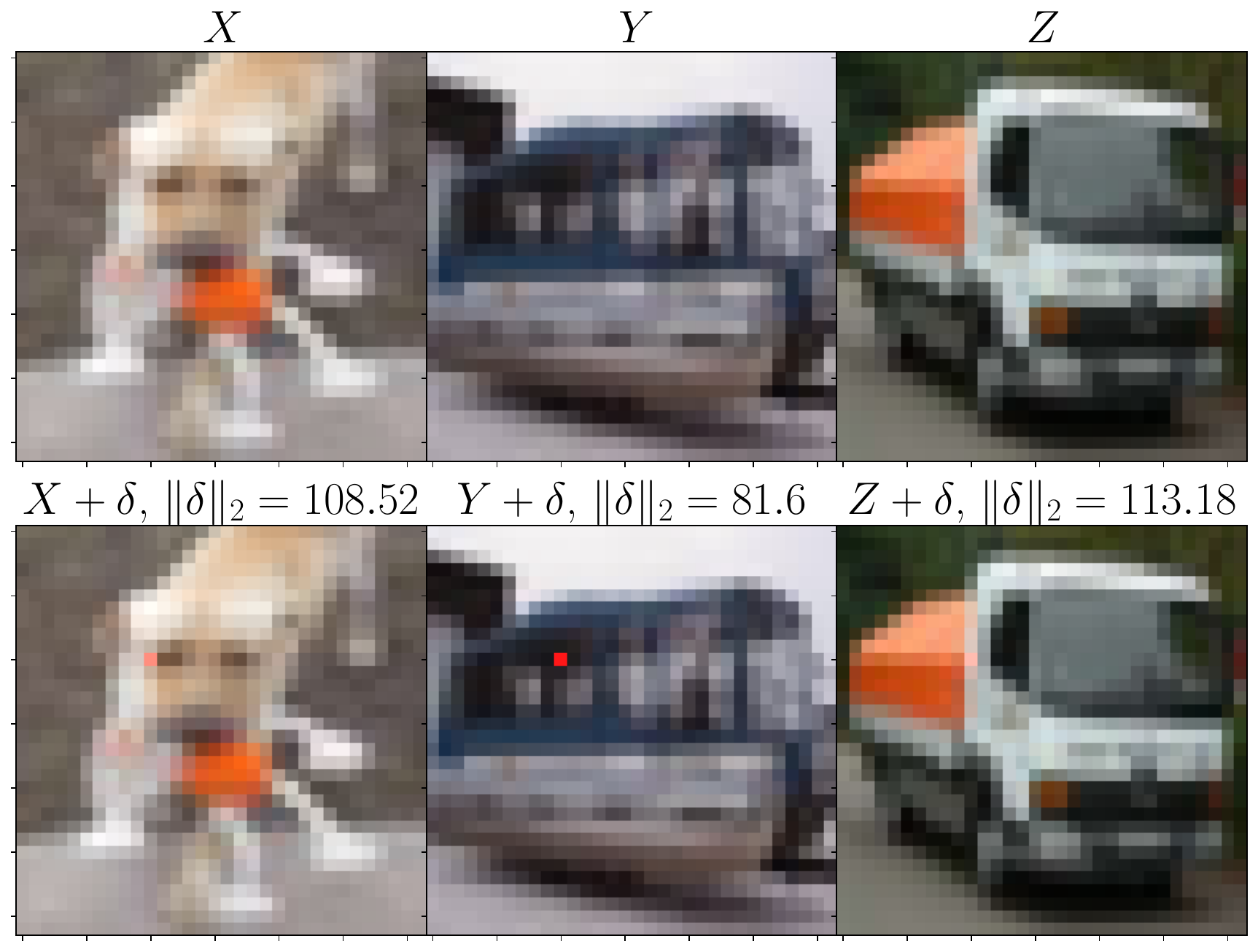}
    \caption{Three clean images from the CIFAR-10 dataset at the top and a random perturbation of the red channel in one pixel at the bottom. For humans the images of the two rows clearly associate to the same object, even with this pixel perturbation.}
    \label{fig:perturb}
\end{figure}
In~\cref{se:lipschitz} we have introduced a way to generate networks that are non-expansive for the Euclidean metric on the input space. We now propose a strategy to generalise the reasoning. It is intuitive, and also evident from~\cref{fig:perturb}, that $\ell^p$ norms of $\mathbb{R}^n$ are not the ones humans utilise to compare pictures. There have been many attempts to design similarity measures between images (see  e.g.~\cite{wang2005euclidean, simard1992efficient}), but it is still not evident what should be the preferred choice. To make the model~\cref{eq:alternation} introduced in~\cref{se:lipschitz} more general, we hence show how it can be made contractive for a more generic metric $d$ defined by a symmetric and positive definite matrix $M\in\mathbb{R}^{n\times n}$. We introduce the notation $\langle v,w\rangle_M = v^TMw$ for any pair of vectors $v,w\in\mathbb{R}^n$. Let again $\Sigma(z)=[\sigma(z_1),\ldots,\sigma(z_n)]$ where $\sigma$ is an increasing scalar function. We focus on the autonomous dynamical system
\begin{equation}
\dot{z}(t) = -W^T\Sigma(MWz+b),\quad W \in\mathbb{R}^{n\times n},\,b\in\mathbb{R}^n,
\label{eq:generalM}
\end{equation}
that is no more a gradient vector field, but it has similar properties to the one studied above. We suppose $M$ is constant and the same for the other involved weights. How this reasoning extends to time-switching systems, as discussed throughout the paper, is quite natural and follows the procedure seen for $M$ being the identity matrix, compare in particular~\cref{se:lipschitz}. We now verify the contractivity of the ODE~\ref{eq:generalM} with respect to the metric defined by $M$:
\begin{align*}
\frac{\ddiff}{\ddiff t}\frac{1}{2}\|z(t)-y(t)\|^2_M &= \frac{1}{2}\frac{\ddiff}{\ddiff t}\left((z(t)-y(t))^TM(z(t)-y(t))\right)\\
&= -\langle W^T\Sigma(MWz+b)-W^T\Sigma(MWy+b),z-y\rangle_M \\
&= -\langle \Sigma(MWz+b)-\Sigma(MWy+b),MWz-MWy\rangle \leq 0.
\end{align*}
This result implies that all the trajectories of~\cref{eq:generalM} will converge to a reference trajectory if the convergence is measured using the metric defined by $M$. Notice that the scalar product in the last line is the canonical one of $\mathbb{R}^n$. Hence, if we have that $\gamma$ is strongly convex, we can still combine these dynamics with expansive vector fields.

\section{Additional details on adversarial robustness}\label{se:expRob}
\begin{figure}[ht!]
\begin{subfigure}{0.49\textwidth}
\centering
\includegraphics[width=\textwidth]{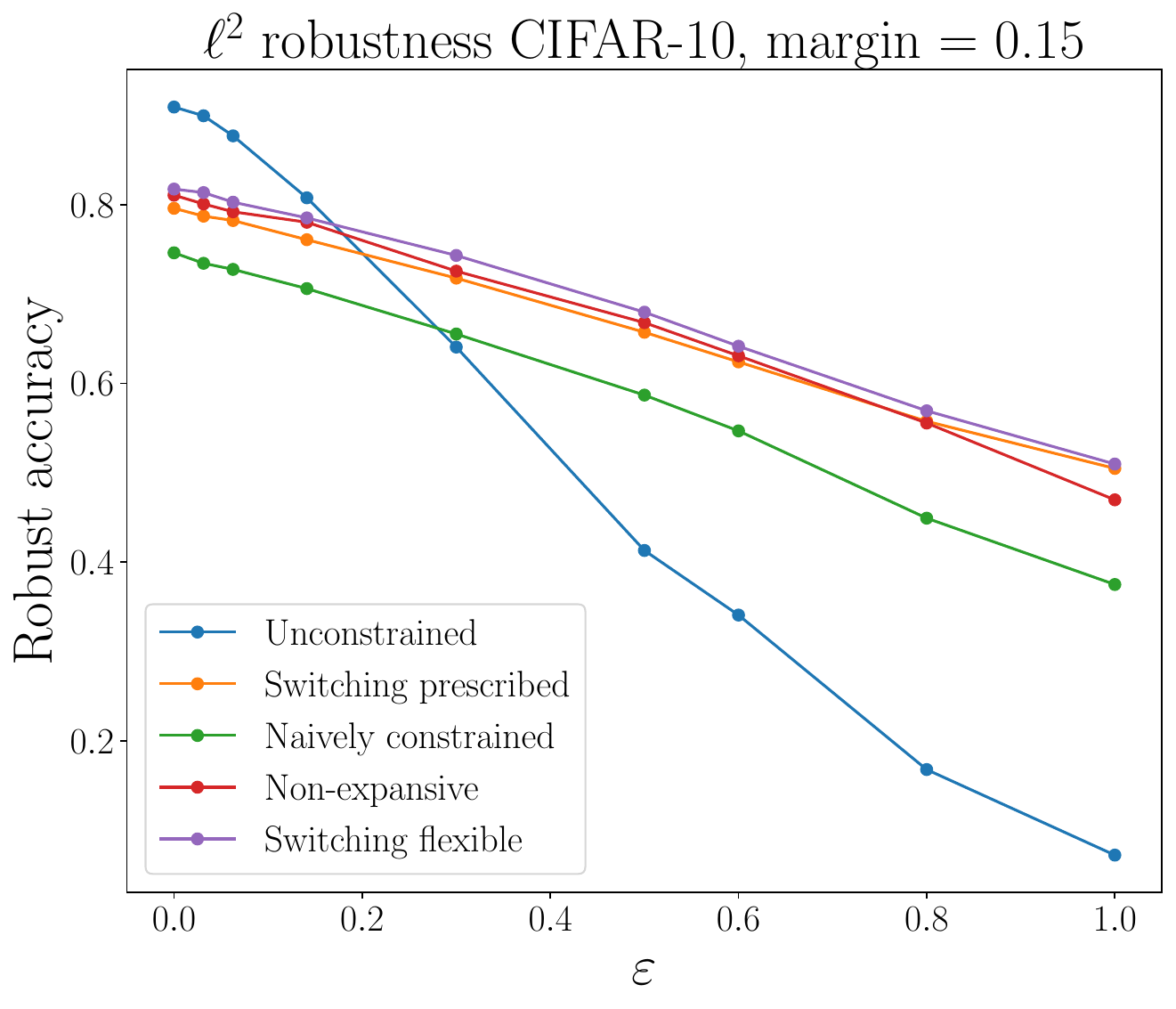}
\end{subfigure}
\begin{subfigure}{0.49\textwidth}
\centering
\includegraphics[width=\textwidth]{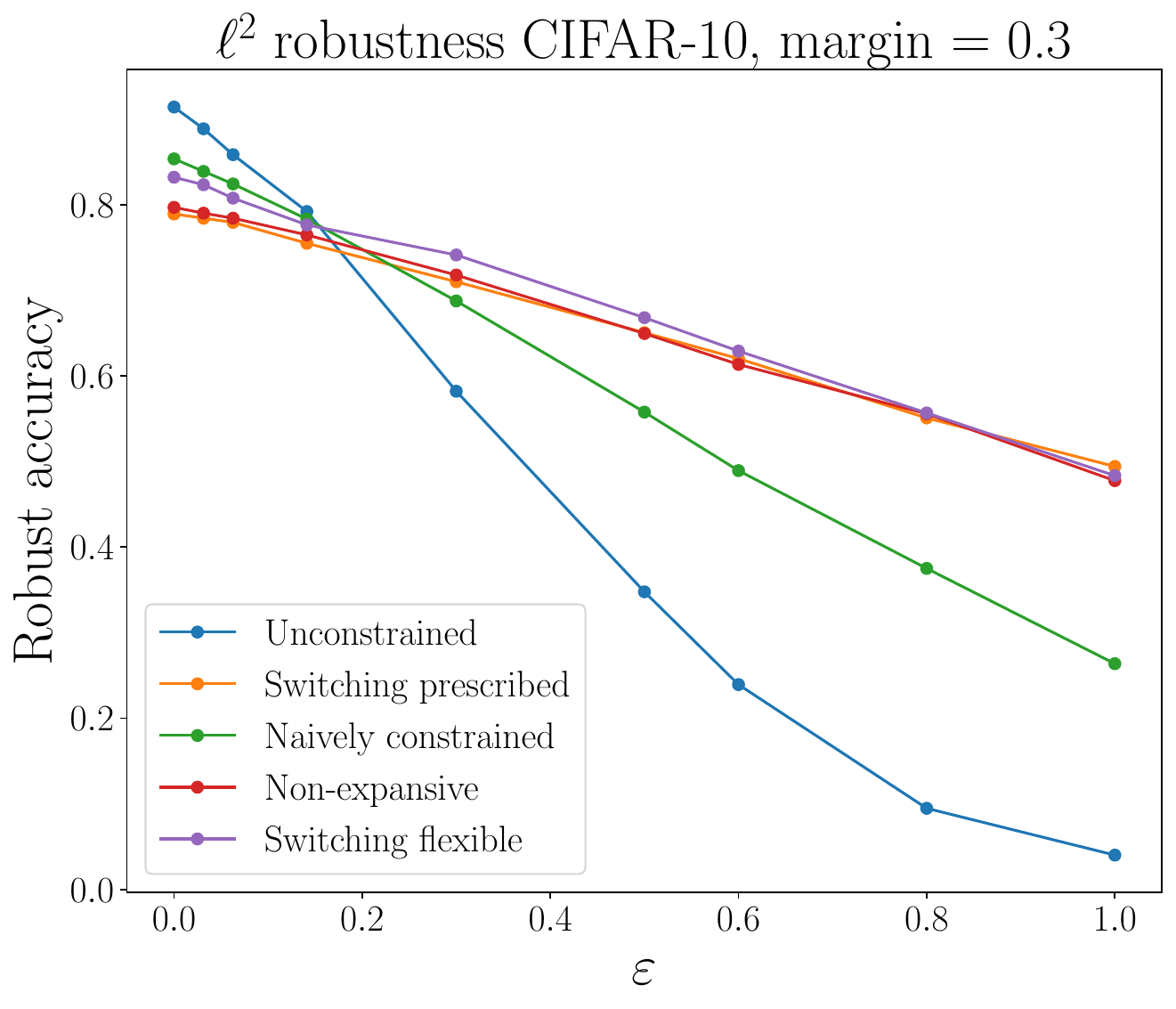}
\end{subfigure}

\begin{subfigure}{0.49\textwidth}
\centering
\includegraphics[width=\textwidth]{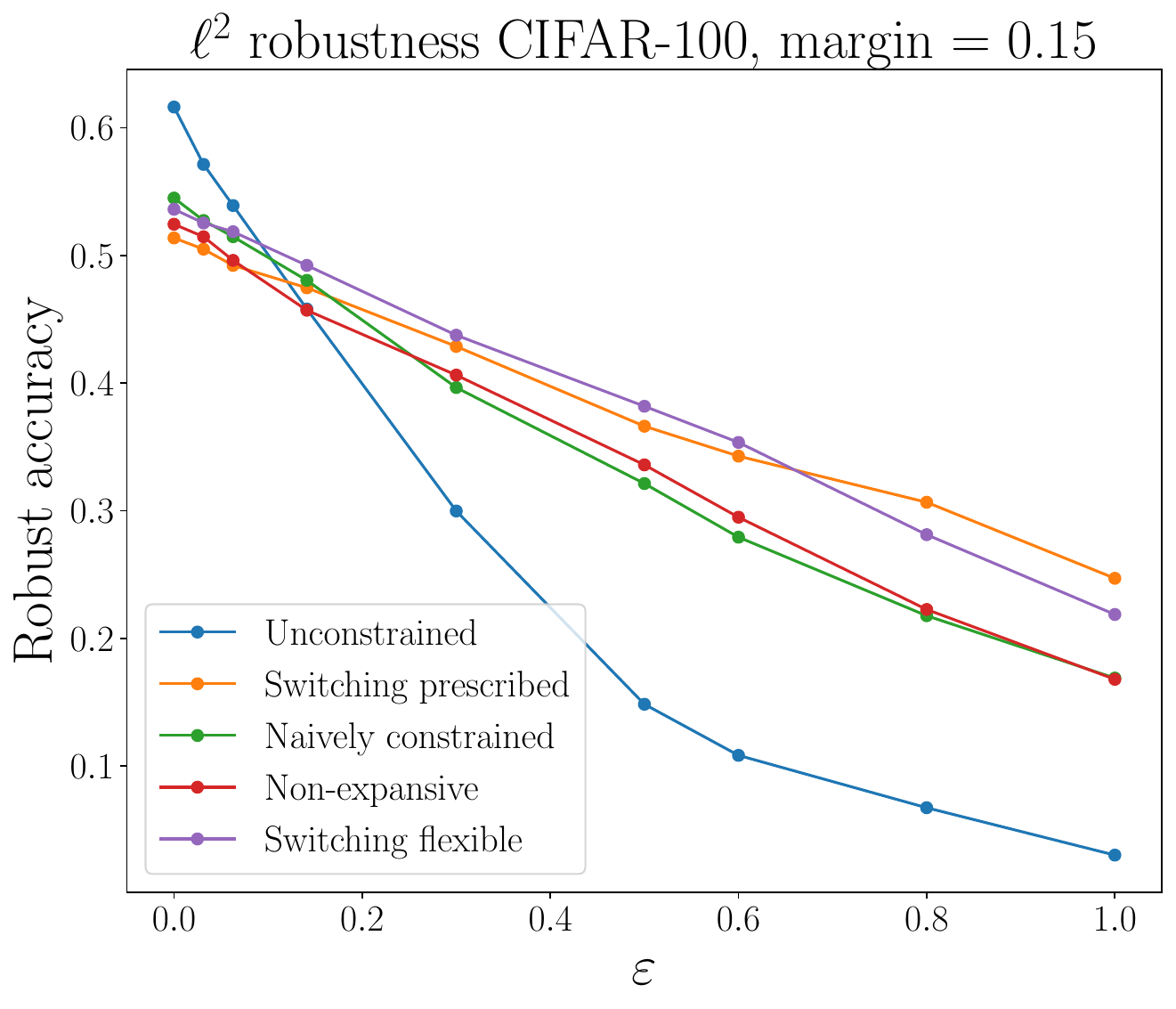}
\end{subfigure}
\begin{subfigure}{0.49\textwidth}
\centering
\includegraphics[width=\textwidth]{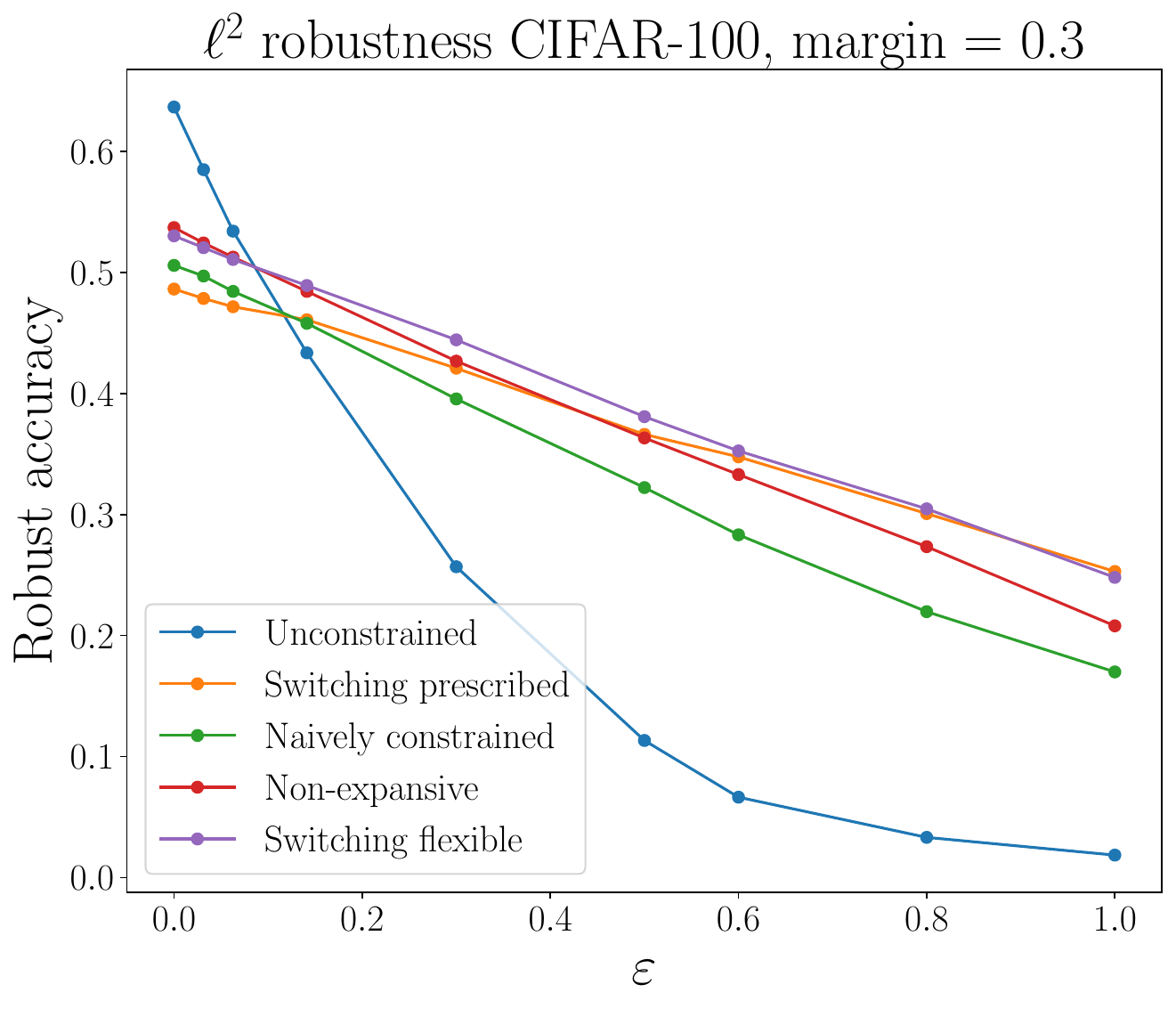}
\end{subfigure}
\caption{Behaviour of the test accuracy on 1024 images, under perturbations of different magnitude $\varepsilon$. On the left we report the experiments with margin value $0.15$ and on the right with $0.3$.}
\label{fig:marginTestRob}
\end{figure}
This section presents additional plots related to the experiment on adversarial robustness analysed in \cref{se:lipschitz}. We have studied the effect of using different margin values in the loss function adopted for the training of the five different neural networks. In \cref{fig:marginTestRob} we report the results obtained for two additional values of the margin parameter. The observations presented for the case $\mathrm{margin}=0.07$ extend also to these ones, where the dynamically constrained networks still perform better. Moreover, the networks that allow for expansive layers still outperform those with all non-expansive layers.

\red{We additionally provide the details on how the stepsizes for the flexibly constrained neural networks are selected. We recall that the flexible alternation strategy is defined by maps of the form
\begin{align*}
\tilde{\Psi}^{h_1}(x) &= x - h_1 A_c^T\Sigma(A_cx+b_c) =: x-h_1 X(A_c,b_c,x),\,\,A_c^TA_c=I \nonumber \\
\Psi^{h_2}(x) &= x + h_2A^T\mathrm{ReLU}(Ax+b) =: x+ h_2 X(A,b,x),\,\,A^TA=I \\
x & \mapsto \tilde{\Psi}^{h_1/2}\circ \tilde{\Psi}^{h_1/2}\circ \Psi^{h_2}(x)=:\Psi^h(x).
\end{align*}
In order for the map $\Psi^h$ to be non-expansive, we either need to have both $\tilde{\Psi}^{h_1/2}$ and $\Psi^{h_2}$ to be $1-$Lipschitz, or them to be $1-$Lipschitz when composed together. For the former case, this is imposed in our implementation by the constraints $0.11<h_1<1.9$ and $-1.9<h_2\leq 0$, since $\|A_c\|=\|A\|=1$ and the relevant condition is \eqref{eq:contractiveDyn}. In the case $h_2>0$, we need to impose the relation
\[
(h_1,h_2)\in \mathcal{R} = \{(h_1,h_2)\in \mathbb{R}^2:\;(1+h_2)(1-h_1a+h_1^2/4)\leq 1\}.
\]
Because of experimental reasons, we choose to impose it by clamping the $h_1$ timestep, with the PyTorch function $\texttt{clamp}$, so that $0.11<h_1<1.9$. Then, we clamp $h_2$ so that the pair $(h_1,h_2)\in\mathcal{R}$, i.e. in the green area represented in \cref{fig:rectangle}.
\begin{figure}[ht!]
    \centering
    \includegraphics[width=\textwidth]{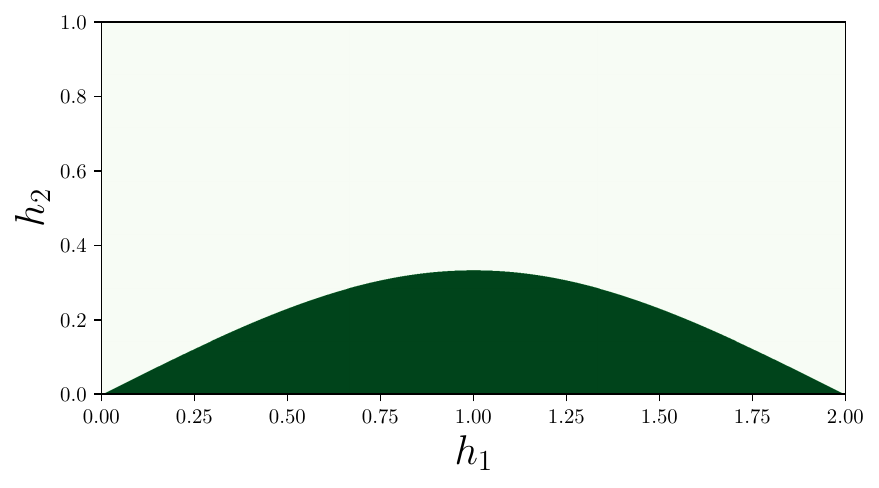}
    \caption{\red{In green, we represent the contractivity area $\mathcal{R}$.}}
    \label{fig:rectangle}
\end{figure}
All the constraints are imposed after each SGD step. In each training iteration, the pair $(h_1,h_2)$ is projected onto the green area.
\begin{figure}
\centering
\begin{subfigure}{.49\textwidth}
\centering
\includegraphics[width=\textwidth]{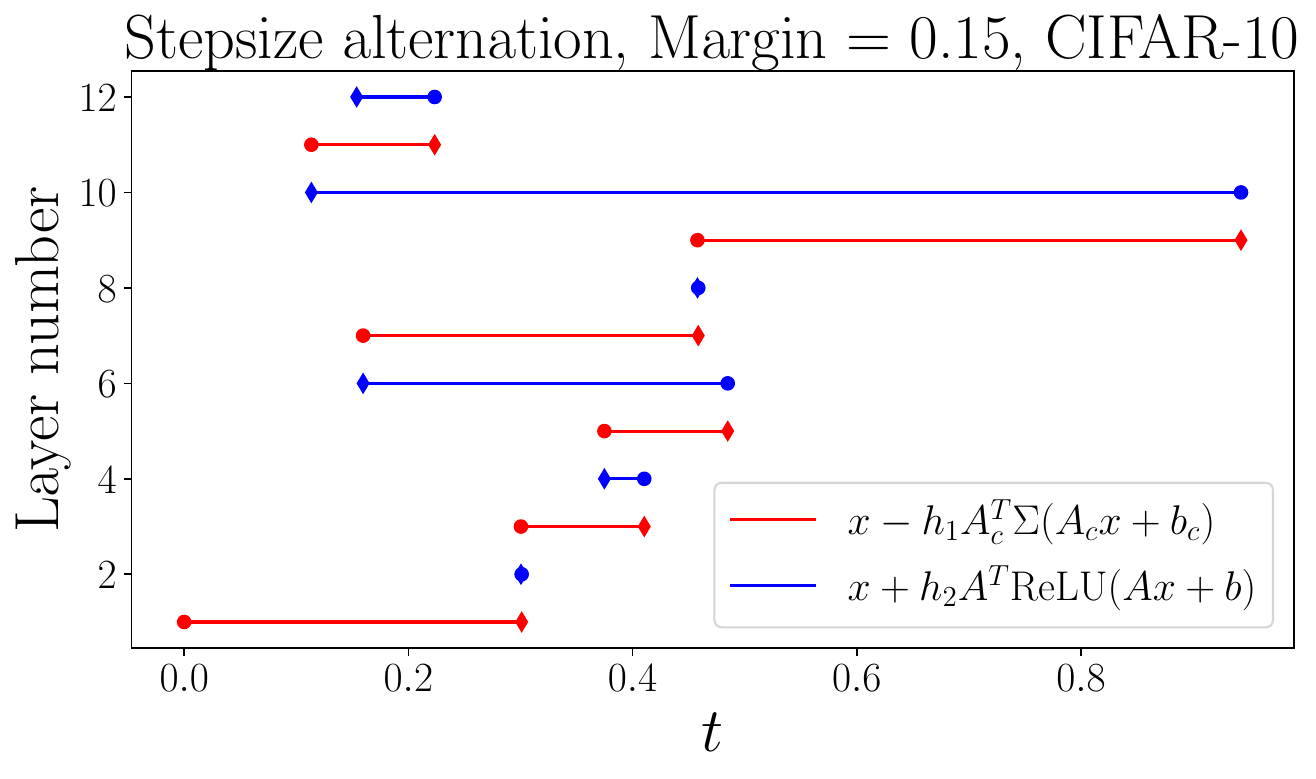}
\caption{Learned step alternation strategy for CIFAR-10 dataset.}
\end{subfigure}
\begin{subfigure}{.49\textwidth}
\centering
\includegraphics[width=\textwidth]{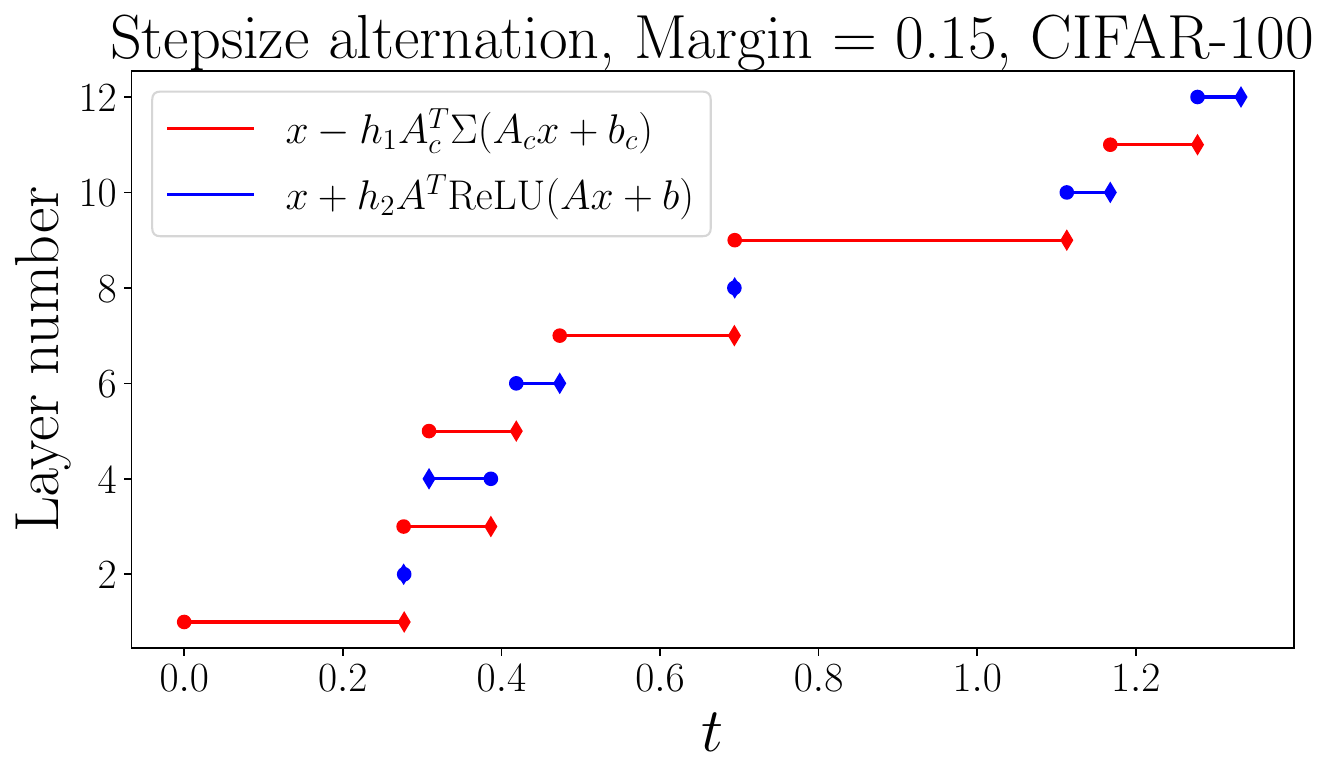}
\caption{Learned step alternation strategy for CIFAR-100 dataset.}
\end{subfigure}
\centering
\begin{subfigure}{.49\textwidth}
\centering
\includegraphics[width=\textwidth]{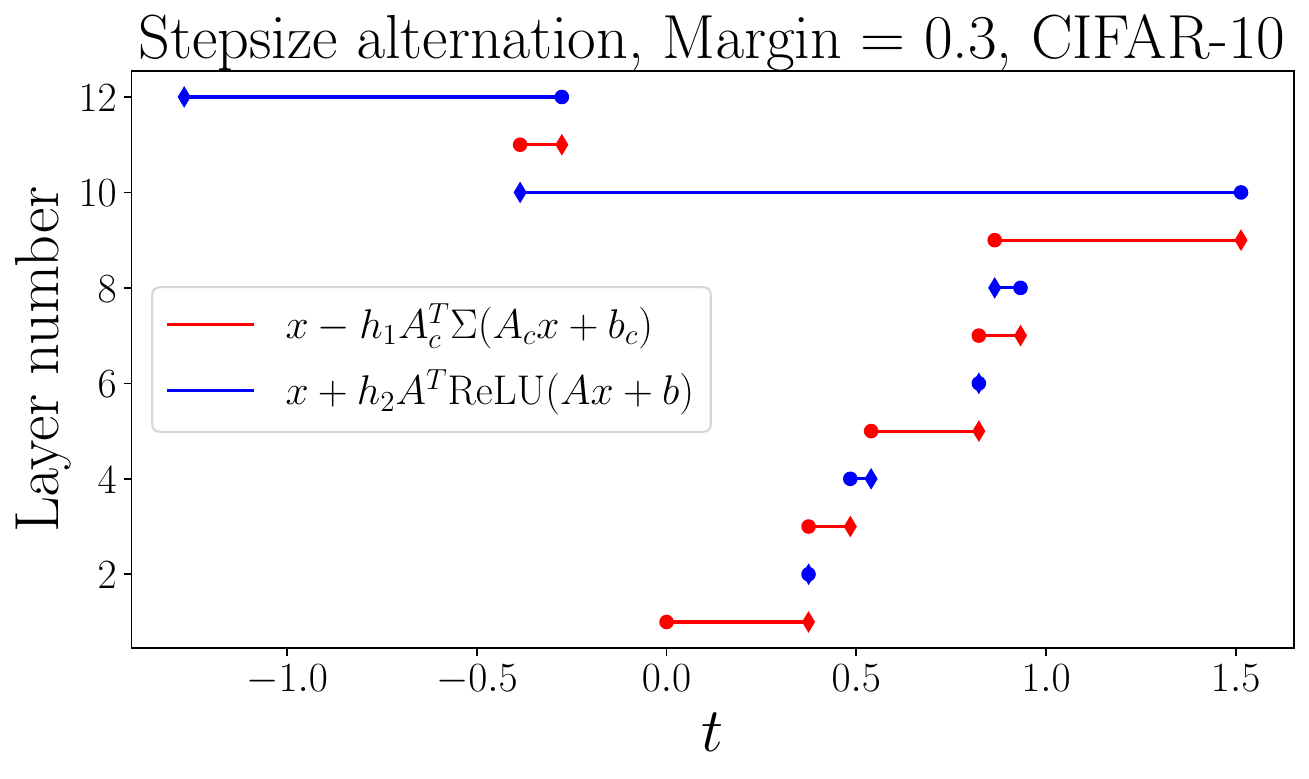}
\caption{Learned step alternation strategy for CIFAR-10 dataset.}
\end{subfigure}
\begin{subfigure}{.49\textwidth}
\centering
\includegraphics[width=\textwidth]{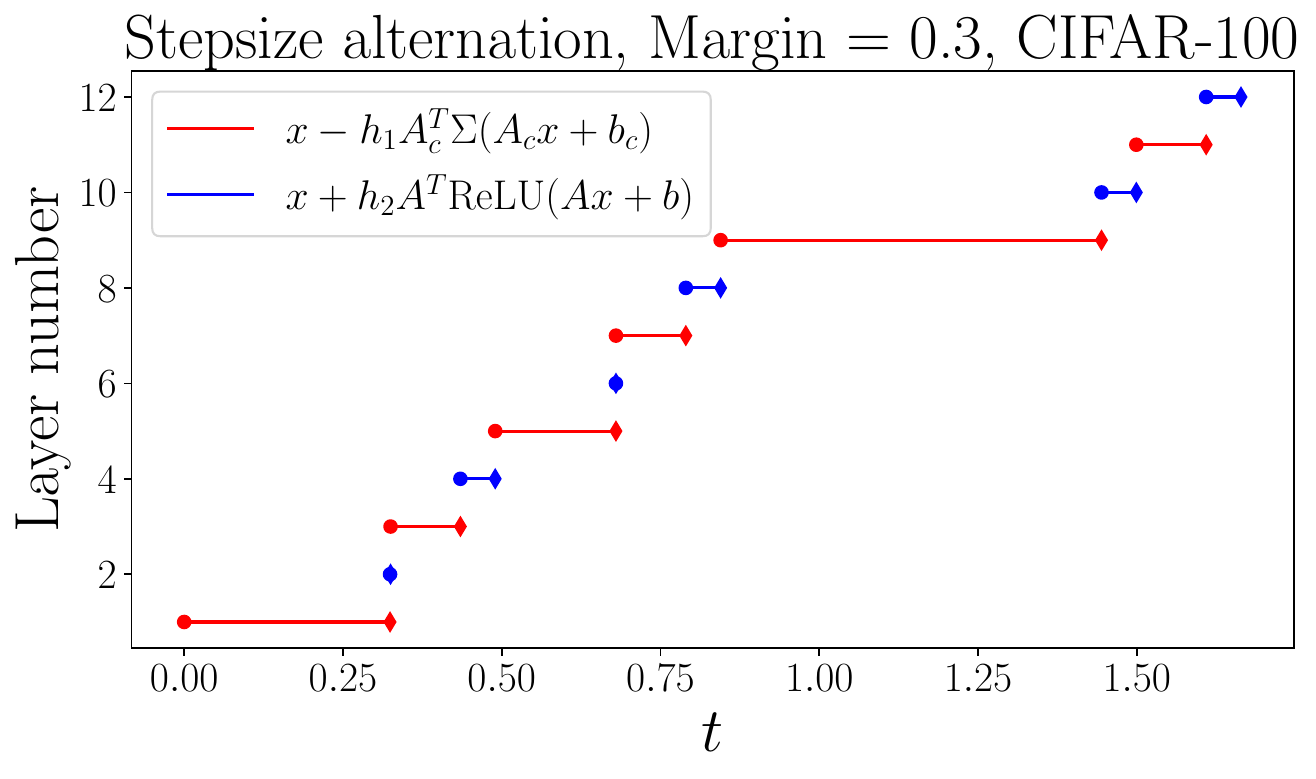}
\caption{Learned step alternation strategy for CIFAR-100 dataset.}
\end{subfigure}
\caption{\red{These figures report the learned step alternation strategies for the flexible regime, with margin values of $0.15$ and $0.3$}}
\label{fig:otherAlternations}
\end{figure}
In \cref{fig:otherAlternations}, we show the learned timesteps for the flexible training strategy that are not included in the manuscript's main text. We remark that \cref{fig:otherAlternations} reports many negative steps for the CIFAR-10 dataset, especially when the margin is $0.15$, where almost all of them are. This pattern does not show up for CIFAR-100, which suggests that the increased complexity of this classification task leads to the need for more freedom in the network layers, given by expansive layers.
}
\section{Proof convergence of splitting strategy for Lipschitz fields}\label{se:convSplitting}
In this section, we prove the convergence of a Lie-Trotter splitting method applied to Lipschitz vector fields, as applied in the proofs of \cref{se:approx}. This reasoning extends similarly to other splitting strategies, like Strang splitting.

\begin{proposition}
Let $X\in\mathfrak{X}(\mathbb{R}^n)$ be a vector field that can be decomposed on the compact set $\Omega\subset\mathbb{R}^n$ as $X=f+g$ for two Lipschitz continuous vector fields $f,g\in\mathfrak{X}(\Omega)$. More precisely, let $L_f,L_g>0$ be such that 
\[
\|f(x)-f(y)\|\leq L_f\|x-y\|,\quad \|g(x)-g(y)\|\leq L_g\|x-y\|\qquad \forall x,y\in\Omega.
\]
The Lie-Trotter splitting method $\varphi^h := \Phi^h_g\circ\Phi^h_f$ is a first-order accurate approximation of the exact flow $\Phi^h_{X}$.
\end{proposition}

\begin{proof}
The proof comes from applying Gronwall's inequality twice. We start considering the function
\[
\gamma(t) := \|\Phi_g^t\circ\Phi_f^h(x_0) - \Phi_{f+g}^t(x_0)\|. 
\]
By the integral definition of the flow map, we have
\[
\Phi_{g}^h\circ \Phi_f^h(x_0) = x_0 + \int_0^hf(\Phi_f^s(x_0))ds + \int_0^h g(\Phi_g^s\circ\Phi_f^h(x_0))ds
\]
and
\[
\Phi_{f+g}^h(x_0) = x_0 + \int_0^hf(\Phi_{f+g}^s(x_0))ds + \int_0^h g(\Phi_{f+g}^s(x_0))ds.
\]
This means that
\begin{align*}
\gamma(h)&\leq L_f\int_0^h \|\Phi_{f+g}^s(x_0)-\Phi_f^s(x_0)\|ds + L_g\int_0^h\|\Phi_g^s\circ \Phi_f^h(x_0)-\Phi_{f+g}^s(x_0)\|ds\\
&= \alpha(h) + \int_0^h \beta(s)\gamma(s)ds
\end{align*}
where $\alpha(h) = L_f\int_0^h \|\Phi_{f+g}^s(x_0)-\Phi_f^s(x_0)\|ds$ and $\beta(s)\equiv L_g$.

Since $\alpha$ is a non-decreasing function, we can apply Gronwall's integral inequality to get
\[
\gamma(h) \leq \alpha(h) \mathrm{exp}\left(\int_0^h\beta(s)ds\right) = \alpha(h)\mathrm{exp}\left(L_gh\right).
\]
We need to bound the function $\alpha(h)$. We study the behaviour of
\[
\lambda(s) := \|\Phi_{f+g}^s(x_0)-\Phi_f^s(x_0)\|
\]
similarly to what was done above. Indeed we have
\[
\Phi_{f+g}^s(x_0) - \Phi_f^s(x_0) = \int_0^sf(\Phi_{f+g}^{s'}(x_0))ds' + \int_0^{s} g(\Phi_{f+g}^{s'}(x_0))ds'-\int_0^sf(\Phi_f^{s'}(x_0))ds'
\]
and hence
\[
\lambda(s) \leq L_f\int_0^s \lambda(s')ds' + \int_0^{s}\|g(\Phi_{f+g}^{s'}(x_0))\|ds'.
\]
Again by Gronwall's inequality, we can conclude
\[
\lambda(s) \leq s\,\max_{x\in\Omega}\|g(x)\|\mathrm{exp}(L_fs).
\]
This inequality allows finishing the proof since
\begin{align*}
\gamma(h) &:= \|\Phi_g^h\circ\Phi_f^h(x_0) - \Phi_{f+g}^h(x_0)\|\\
&\leq \max_{x\in\Omega}\|g(x)\|\mathrm{exp}(L_gh)\int_0^h\mathrm{\exp}(L_fh)sds\\
&=\frac{h^2}{2} \mathrm{\exp}((L_f+L_g)h)\max_{x\in\Omega}\|g(x)\|\\
&\leq \frac{h^2}{2} \mathrm{\exp}(\mathrm{Lip}(X)h)\max_{x\in\Omega}\|g(x)\|.
\end{align*}
Thus Lie-Trotter splitting is a first-order method for the vector field $X$.
\end{proof}

\end{document}